\runningauthor{Balakrishnan, Rinaldo, Sheehy, Singh, Wasserman}
\let\hat\widehat
\newcommand{\vol}{\mathrm{vol}}
\newenvironment{packed_enum}{
\begin{enumerate}
\setlength{\itemsep}{1pt}
\setlength{\parskip}{0pt}
\setlength{\parsep}{0pt}
}{\end{enumerate}}
\newcommand{\simpcomp}{\cal{K}}
\newcommand{\bdy}{\partial}
\newcommand{\im}{\mathrm{im}~}
\newcommand{\Z}{\mathbb{Z}}
\newcommand{\R}{\mathbb{R}}
\newtheorem{assumption}{Assumption}
\newtheorem{lemma}{Lemma}
\newtheorem{theorem}{Theorem}
\newtheorem{remark}{Remark}
\newenvironment{enum}{
\begin{enumerate}
  \setlength{\itemsep}{1pt}
  \setlength{\parskip}{0pt}
  \setlength{\parsep}{0pt}
}{\end{enumerate}}
\begin{document}

\twocolumn[

\aistatstitle{Minimax Rates for Homology Inference}

\aistatsauthor{Sivaraman Balakrishnan$^\dagger$ \And  Alessandro Rinaldo$^\dagger$ \And Don Sheehy$^{\dagger\dagger}$}
\aistatsauthor{Aarti Singh$^{\dagger}$ \And Larry Wasserman$^{\dagger}$}

\aistatsaddress{ $^{\dagger}$Carnegie Mellon University  \And $^{\dagger\dagger}$INRIA } ]

\begin{abstract}
Often, high dimensional data lie
close to a low-dimensional submanifold and it is of interest to understand the geometry
of these submanifolds. The homology groups of a manifold are important topological
invariants that provide an algebraic summary of 
the manifold. These groups
contain rich topological information, for instance, about the connected components, holes, tunnels and sometimes the dimension of the manifold.
In this paper, we consider the statistical
problem of estimating the homology of a manifold from noisy samples under
several different noise models.
We derive upper and lower bounds
on the minimax risk for this problem.
Our upper bounds are based on estimators which are constructed from 
a union of balls of appropriate radius
around carefully selected points.
In each case we establish complementary lower bounds
using Le Cam's lemma.
\end{abstract}

\vspace{-.5cm}
\section{Introduction}
Let $M$ be a 
$d$-dimensional manifold embedded in $\mathbb{R}^D$
where $d \leq D$.
The {\em homology groups}   ${\cal H}(M)$  of $M$ \cite{hatcher01} is an algebraic summary of the properties of $M$. The homology groups of a manifold describe its topological features
such as its connected components, holes, tunnels, etc.

In machine learning, there is much focus on clustering.
However, the clusters are only the zeroth order homology
and hence only scratch the surface of the topological information
in a dataset. Extracting information beyond clustering is known 
as topological data analysis.
It is worth emphasizing that the homology groups are 
topological invariants of a manifold that can be \emph{efficiently} computed \cite{plex,witness}.
Examples of applications of homology inference 
have been growing rapidly in the last few years. Homology inference has found application
in medical
imaging and neuroscience \cite{bubenik09, singh08}, sensor networks
\cite{ghrist07, silva07}, landmark-based shape data analyses
\cite{gamble10}, proteomics \cite{sacan07}, microarray analysis
\cite{dequeant08} and cellular biology \cite{kasson07}. The books by
\cite{afra05, edelsbrunner09, pascucci01} contain various
case studies in applications in fields ranging from computational biology to geophysics.
%There has been increasing interest
%in extracting topological information beyond clustering from data,
%mainly from the 
%computational topology community.
%See \cite{edelsbrunner2009} and
%\cite{carlsson2008}
%for examples of data analysis using concepts from topology.

In this paper we study the problem of 
estimating the homology of a manifold $M$ from a noisy sample
$Y_1,\ldots, Y_n$.
Specifically, we bound the minimax risk
\begin{equation}
R_n\equiv \inf_{\hat{\cal H}}\sup_{Q\in {\cal Q}}
Q^n \Bigl(\hat{\cal H} \neq {\cal H}(M)\Bigr)
\end{equation}
where
the infimum is over \emph{all} estimators
$\hat{\cal H}$ of the homology of $M$ and the supremum is over
appropriately defined classes of distributions ${\cal Q}$ for $Y$.
Note that $0 \leq R_n \leq 1$ with $R_n=1$
meaning that the problem is hopeless.
Bounding the minimax risk is equivalent to
bounding the {\em sample complexity} of the best possible estimator, 
defined by
$n(\epsilon) = \min\bigl\{n:\ R_n \leq \epsilon\bigr\}$
where $0 < \epsilon < 1$.
%In addition to the case with no noise we will consider
%different noise models: clutter, tubular and additive.
%These models are described in Section
%\ref{sec::model}.

\subsection{Related Work}
Other work on statistical homology includes that of Chazal et. al.
\cite{chazal2011} who show under certain conditions
the homology estimate of a manifold from a sample is stable under
 noise perturbation that is small in a 
Wasserstein sense.
%that two distributions that are close in 
%Wasserstein distance induce similar ``distance functions'' whose level sets
%can be used in homology inference. This is in essence a stability theorem for 
Kahle \cite{Kahle20091658} studies
the homology of random geometric graphs and proves many threshold and central limit 
theorems for their homology.
Adler et. al. \cite{adler} study 
the homology induced by the level sets of certain Gaussian random fields. 
There is also a large literature on manifold denoising that 
focuses on aspects of the manifold not related to homology;
see for instance \cite{hein06manifold} and references therein.

Our upper bounds mainly generalize those in the work of Niyogi, Smale 
and Weinberger (henceforth NSW) \cite{niyogi2008,niyogi2010}.
They establish a general result showing that when \emph{all} the samples 
are dense in a thin region surrounding the manifold, a union of appropriately
sized balls around the samples can be used to construct an accurate estimate of the homology
with high probability. Under a variety of different noise models we will show that
even when \emph{all} the samples are not close to the manifold
it is possible to ``clean'' the samples (essentially removing those in regions of low-density)
and be left with samples which are dense in a thin region around the manifold.

In the case of additive noise with general noise distributions however,
we cannot expect too many samples to fall close to the manifold. We
will show that when the noise distribution is known one can use a
statistical deconvolution procedure to obtain a ``deconvolved
measure'' concentrated around the manifold from which we can in turn
draw a small number of samples and apply the cleaning procedure
described above to them.  Deconvolution has been extensively studied in the
statistical literature (see \cite{fan_deconvolution} and references therein). Most related to
our application is the work of Koltchinskii \cite{koltchinskii} who uses
deconvolution to estimate the dimension and cluster tree of a
distribution supported on a submanifold. We defer a detailed
comparison to Section \ref{sec:decon} after the necessary preliminaries
have been introduced.

%\cite{niyogi2010}. 
%The former gives an estimator ${\cal \hat{H}}$
%in the noiseless case and computes the upper bound on the risk of
%this estimator.
%The latter does the same in a model in which noise is 
%Gaussian and perpendicular to the manifold.

%\subsection{Our Contributions.}
To the best of our knowledge, ours is the first paper
to obtain lower and upper minimax bounds for the problem of
inferring the homology of a manifold.
There are a few existing results on upper bounds.
A summary of previous results and 
the results in this paper are in Table 1.%\ref{tab:summary}.
%NSW refers to Niyogi, Smale and Weinberger \cite{niyogi2008,niyogi2010}.

{\em Outline.}
In Section \ref{sec::model} we describe the statistical model.
In Section \ref{sec::homology} we give a brief description of homology.
In Section \ref{sec::prelim} we give an overview of our techniques.
We derive the minimax rates for the four noise settings in Section \ref{sec:rates}. Technical proofs are contained in the Appendix.

\begin{center}
\begin{table*}
\label{tab:summary}
\begin{tabular}{c||ccccc}
             & \multicolumn{5}{c}{{\bf Noise Model}} \\
             & {\bf Noiseless}   & {\bf Clutter}    & {\bf Tubular}    & {\bf Additive Gaussian} & {\bf General additive ($\tau$ fixed)} \\ \hline
{\bf Upper Bound} &  NSW        & This paper & NSW        & This paper & This paper \\
{\bf Lower Bound}  &  This paper & This paper & This paper &  This paper & This paper
\end{tabular}
\caption{Summary of our contributions}
\end{table*}
\end{center}

\vspace{-1cm}
\section{Statistical Model}
\label{sec::model}
We assume that the sample $\{ Y_1,\ldots, Y_n  \} \subset \mathbb{R}^D$ constitutes a set of ``noisy" observations of an unknown $d$-dimensional manifold $M$, with $d < D$, whose homology  we seek to estimate. The distribution of the sample depends on the properties of the manifold $M$ as well as on the type of sampling noise, which we  describe below by formulating various statistical models for sampling data from manifolds. 
%In our setting we allow for data observed exactly on $M$ and also randomly sampled in the proximity of $M$. 

{\bf Notation.}
We let $B^k_r(x)$ denote a $k$-dimensional ball of radius $r$ centered at $x$.
When $k=D$, we write
$B_r(x)$ instead of $B_r^D(x)$.
For any set $M$ and any $\sigma>0$ define
${\sf tube}_\sigma(M) = \bigcup_{x\in M}B_\sigma(x)$.
Let $v_k$ denote the volume of the $k$-dimensional unit ball.
Finally, for clarity we let
$c_1,c_2, \ldots, C_1,C_2,\ldots$
denote various positive constants whose value
can be different in different expressions. The constants will be specified in
the corresponding proofs.

%We assume that
%$Y_1,\ldots, Y_n$ are an i.i.d. sample
%from a distribution $Q$.
%The distribution $Q$ depends on the manifold $M$,
%the distribution $P$ supported on the manifold,
%and the distribution of the noise.

%Let $M$ be a manifold and let $P$ be a distribution supported on $M$.
%We also assume that $P$ has a density $p$ with respect to the uniform measure on $M$.
%Further, we assume that
%$0 < a \leq p(x) < \infty $ for all $x\in M$.
%Let ${\cal P}={\cal P}(M,a)$
%denote all such distributions.
%To complete the specification of the model,
%we need to describe the noise model
%and the regularity assumptions on $M$.

\vspace{.1cm}
{\bf Manifold Assumptions.}
We assume that the unknown manifold $M$ is a $d$-dimensional smooth compact Riemannian 
manifold without boundary embedded in
the compact set ${\cal X} = [0,1]^D$. We further assume that the volume of the manifold 
is bounded from above by a constant which can depend on the dimensions $d, D$, 
i.e. we assume $\vol(M) \leq C_{D,d}$.
Compact $d$-dimensional manifolds without boundary typically reside in an ambient dimension $D > d$, an assumption we will make throughout this paper. The main regularity condition we impose on $M$ is that its {\em condition number} be not too small. 
%Without loss of generality we assume that ${\cal X}=[0,1]^D$.
The {\em condition number} $\kappa(M)$ (see \cite{niyogi2008}) is the largest number $\tau$
such that the open normal bundle about $M$ of radius $r$ 
is imbedded in $\mathbb{R}^D$ for every $r < \tau$.
%The {\em condition number} $\kappa(M)$ is the largest number $\tau$
%such that if $d(y,M)\leq \tau$ then $y$ has a unique projection on $M$.
%Here, $d(y,M) = \inf_{s\in M}||y-s||$.
% for more information about the condition number.
For $\tau >0$ let
${\cal M} \equiv {\cal M}(\tau)$=$\Bigl\{M: \kappa(M) \geq \tau \Bigr\}$
denote the set of all such manifolds with condition number no smaller than $\tau$.
A manifold with large condition number 
%is smooth and 
does not come too
close to being self-intersecting.
% Let
% ${\cal P} = \bigcup_{M\in {\cal M}}  {\cal P}(M,c_1,c_2).$
%Recall the two manifolds $M_1$ and $M_2$ defined earlier and note that
%$\kappa(M_1) = \kappa(M_2) = \tau$.
We consider the collection $\mathcal{P} \equiv \mathcal{P}(\mathcal{M}) \equiv \mathcal{P}(\mathcal{M},a)$ of all probability distributions supported over manifolds $M$ in ${\cal M}$ having densities $p$ with respect to the volume form on $M$ uniformly bounded from below by a constant $a>0$, i.e. 
$0 < a \leq p(x) < \infty $ for all $x\in M$. For expositional clarity we treat $a$ as a \emph{fixed} constant although our upper and lower bounds match in their dependence on $a$.

\vspace{.1cm}
{\bf The Noise Models.}
We consider four noise models and, for each of them, we specify a class $\mathcal{Q}$ of probability distributions for the  sample. 

 {\it Noiseless.}
%Let $P$ be a distribution supported on $M$.
We observe data 
$Y_1,\ldots, Y_n \sim P$
where $P \in \mathcal{P}$. In this case, 
${\cal Q}  ={\cal Q}(\tau) = {\cal P}$.

{\it Clutter Noise.}
We observe data $Y_1,\ldots, Y_n$ from the mixture
 $Q=(1-\pi)U + \pi P$
where, $P \in \mathcal{P}$, $0\leq \pi \leq 1$ and
$U$ is a uniform distribution on ${\cal X}$. The points drawn from $U$ are called background clutter. Then
${\cal Q} = {\cal Q}(\pi,\tau) = \Bigl\{ Q = (1-\pi)U + \pi P:\ P\in {\cal P} \Bigr\}.$
Notice that $\pi=1$ reduces to the noiseless case.

{\it Tubular Noise.}
We observe
$Y_1,\ldots, Y_n \sim Q_{M,\sigma}$
where $Q_{M,\sigma}$ is uniform on a tube of size $\sigma$ around $M$.
In this case
${\cal Q} = {\cal Q}(\sigma, \tau) = \Bigl\{ Q_{M,\sigma}:\ M \in {\cal M}\Bigr\}.$

{\it Additive Noise.}
The  data are of the form
$Y_i = X_i + \epsilon_i$,
where $X_1,\ldots, X_n \sim P$, for some $P \in \mathcal{P}$, and
$\epsilon_1,\ldots,\epsilon_n$ are a sample from a noise distribution $\Phi$.
Note that $Q = P\star \Phi$, that is, $Q$ is the convolution of $P$ and $\Phi$.
We consider two cases:
\begin{packed_enum}
\item 
%We take
$\Phi$ is
%to be 
a $D$-dimensional Gaussian with mean 
$(0,\ldots, 0)$ and covariance $\sigma^2 I$, with $\sigma \ll \tau$.
Define
${\cal Q} = {\cal Q}(\sigma, \tau) = \Bigl\{ Q=P \star \Phi:\ P \in {\cal P}\Bigr\}.$
\item 
%In this second, more general, case we take 
$\Phi$ is any known noise distribution whose Fourier transform
is bounded away from $0$ but with the added restriction that we only consider manifolds
with $\tau$ being a fixed constant.
Then
${\cal Q} = {\cal Q}(\Phi) = \Bigl\{ Q=P \star \Phi:\ P \in {\cal P}_\tau\Bigr\}.$
where ${\cal P}_\tau$ is the subset of ${\cal P}$ comprised of distributions supported on manifolds $M$ with condition number at least as large as the  ${\it fixed}$ value $\tau$.
\end{packed_enum}
\vspace{-.2cm}

The noise model
used in
\cite{niyogi2010}
is to take the noise at any point to be only along the normal fibres;
this seems unnatural and we will not consider that model here.

In almost all of the distribution classes considered  we allow for $\tau$ to vanish as $n$ gets bigger, which is equivalent to letting the difficulty of the statistical problem increase with the sample size.
To this end, we will also analyze the quantity $\tau_n \equiv \tau_n(\epsilon)= \inf\{\tau:\ R_n \leq \epsilon\}$, which corresponds to the smallest condition number that
permits accurate estimation. We call this the \emph{resolution}.

\vspace{-.5cm}
\section{Homology}
\vspace{-.3cm}
\label{sec::homology}
Often in our paper we will use phrases like ``the homology of the union of balls around samples''. 
In this section we explain this usage  and discuss briefly \emph{simplicial homology} (see Hatcher (2001)
for a detailed treatment) and its computation.

%\emph{Homology} is a branch of topology that gives an algebraic
%summary of the shape of a topological space.   The zeroth order homology group
%corresponds to clusters.  The higher order homology groups correspond
%to tunnels, voids, etc.

The homology ${\cal H}$ of a
 space $S$ is a collection of groups that correspond to
topological features of $S$. 
In what follows, it might help the reader's
intuition to imagine that we are starting
with a dense sample of points $U$ on a manifold
and building a collection of simplices from these points.
The union of balls $\bigcup_{y\in U}B_\epsilon(y)$ gives a geometric approximation 
to the underlying manifold. This is however a continuous (infinite) collection of points.
To make computation tractable we need to be able to reduce the computation
of homology from a continuous space to its discretization.
%A central concept 
%is that of a \emph{simplicial complex}.
The \v Cech complex (a particular \emph{simplicial complex}, see Figure \ref{fig::cech}) 
which is described below gives a discrete representation of the union of balls.
A classic result in topology called the Nerve Theorem~\cite{hatcher01} 
states that the homology of
$\bigcup_{y\in U}B_\epsilon(y)$
is identical to the homology of the corresponding \v Cech complex.
%This reduces the homology of a continuous space to 
%the homology of its discretization, making it tractable for computation.

We now describe a simplicial complex and its homology. 
A \emph{simplicial complex} is a hereditary set system $\simpcomp$
over a vertex set $V$, i.e. $\sigma\subset \sigma'\in \simpcomp$
implies that $\sigma\in \simpcomp$.  The \emph{dimension} of a simplex
$\sigma$ is $|\sigma|-1$; singletons are $0$-simplices or vertices,
pairs in $\cal{K}$ are $1$-simplices or edges, triples are
$2$-simplices or triangles, etc.  A $p$-chain is a formal sum of
$p$-simplices.  The coefficients are taken in $\Z_2$, the integers mod
$2$.%
\footnote{In general, homology may be defined over any ring, but we
stick with $\Z_2$ for ease of exposition and computation.}  Thus,
chains may be viewed as subsets of simplices and addition (mod 2)
as symmetric
difference of sets.  Addition of chains forms an abelian group called
the \emph{chain group} $C_p$ with $0$ denoting the empty chain.

A $p$-simplex $\sigma =\{v_0,\ldots,v_p\}$ has $p+1$ simplices of
dimension $p-1$ on its boundary, denoted $\sigma_i =
\sigma\setminus\{v_i\}$.  The \emph{boundary} of a simplex is $\bdy_p
\sigma = \sum_{i=0}^p \sigma_i$.  The \emph{boundary operator}
$\bdy_p:C_p \to C_{p-1}$ is the natural extension of the boundary of a
simplex to the boundary of a chain:
$\bdy_p c = \sum_{\sigma\in c}\bdy_p \sigma$.

The kernel and image of the boundary operator are two important subgroups of the chain group:
\emph{the cycle group}: $Z_p = \ker \bdy_p = \{z\in C_p : \bdy_p z = 0\}$, and
\emph{the boundary group}: $B_p = \im \bdy_p = \{\bdy_{p+1} c : c\in C_{p+1}\}$.
The \emph{cycles} $Z_p$ are those chains that have boundary $0$. 
The \emph{boundary cycles} $B_p$ are those $p$-chains that are the boundary of some $p+1$-chain.
It is easy to check that $\bdy_{p-1}\bdy_p c = 0$ and thus $B_p\subset Z_p\subset C_p$.
See Figure~\ref{fig:chain_complexes}.

\begin{figure*}[ht] % (fold)
  \centering
  \includegraphics[width=.5\textwidth]{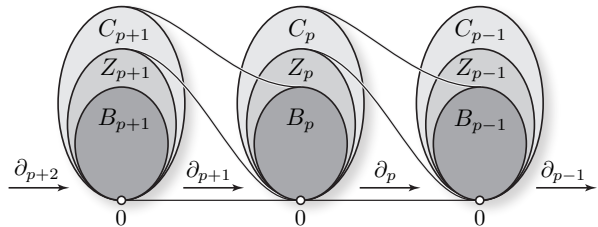}
  \caption{\label{fig:chain_complexes}
    Relationship between chains $C_p$, 
    cycles $Z_p = {\rm ker}\ \partial_p$ and boundaries
    $B_p= {\rm im}\ \partial_{p+1}$.
    The chains $C_p$ are just collections of simplices.
    The chains in $Z_p$ are the cycles.
    The cycles in $B_p$ are the cycles that happen to be boundaries of chains in $C_{p+1}$.
  }
\end{figure*} % chain_complexes (end)

Two cycles $z_1,z_2\in Z_p$ are \emph{homologous} if $z_1-z_2\in B_p$,
i.e.\ their difference is the boundary of a $p+1$-chain.  The $p$th
homology group $H_p$ is defined as the quotient group $Z_p / B_p$.
That is, the homology group is a collection of equivalence classes of
cycles.  The first homology group $H_0$ corresponds to connected
components (clusters).  The next homology group $H_1$ corresponds to
cycles (or loops).  Higher order homology groups correspond to
equivalence classes of higher dimensional cycles.\footnote{
Intuitively, boundary cycles are ``filled in''
cycles and two cycles are homologous if one cycle can be deformed
into the other cycle.} The homology of $\mathcal{K}$ is the collection $\mathcal{H}$ of all its homology groups. 
%For a generic space $S$, such as a manifold, the homology of $S$ can be obtained via the Nerve Theorem from the homology of a representative simplicial complex such as the \v Cech complex described below.

%The $p$th \emph{Betti number} $\beta_p$ is the rank of $H_p$.
%Roughly speaking, $\beta_0$ is the number of connected components, $\beta_1$ is the number of tunnels and so on.
%, $\beta_2$ is the number of voids and so on.

\begin{figure}[ht] % (fold)
  \centering
  \includegraphics[width=.4\textwidth]{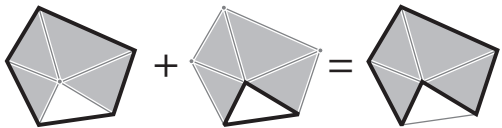}
  \caption{\label{fig:homologous_cycles}
    The sum of two $1$-cycles is another $1$-cycle. Here the cycles are homologous because their sum (in $\Z_2$)% (addition and subtraction are the same in $\Z_2$) 
    is the boundary of a $2$-chain of triangles.
  }
\end{figure} % homologous_cycles (end)

%As a simple example,
%consider a triangle with three vertices $a,b,c$.
%The simplical complex is
%$K = \{ \emptyset, [a], [b], [c], [a,b],[a,c],[b,c], [a,b,c]\}.$
%The boundaries are
%$$
%\partial_0[a] = \partial_0[b] = \partial_0[c] = 0,
%$$
%$$
%\partial_1 [a,b] = [a]+[b],\ \ \ 
%\partial_1 [a,c] = [a]+[c],\ \ \ 
%\partial_1 [b,c] = [b]+[c]
%$$
%$$
%\partial_2 [a,b,c] = [b,c] + [a,c] + [a,b] = 0.
%$$
%In this case,
%$H_0 = \Z_2$, $H_1 = 0$, $H_2 = 0$,
%where $0$ denotes the trivial group, 
%and
%$\beta_0 =1$, $\beta_1 = 0$, $\beta_2 = 0$.

The \v Cech complex is a specific simplicial complex defined as follows. 
Fix some $\epsilon >0$ and a set of points $S\subset \R^D$.
The \emph{\v Cech complex} consists of all simplices
$\sigma$ such that
$\bigcap_{x\in\sigma}B_\epsilon(x) \neq \emptyset$
where
$B_\epsilon(x)$ is a ball of radius $\epsilon$ centered at $x$.
See Figure \ref{fig::cech}.

%The union of balls $\bigcup_{y\in S}B_\epsilon(y)$ gives a geometric approximation 
%to the underlying manifold containing the points of $S$.
%The \v Cech complex gives a discrete representation of the union of balls.
%A classic result in topology called the Nerve Theorem~\cite{hatcher01} implies that the homology of
%$\bigcup_{y\in S}B_\epsilon(y)$
%is identical to the homology of the corresponding \v Cech complex.
%This reduces the homology of a continuous space to 
%the homology of its discretization, making it tractable for computation.

\begin{figure}[!ht] % (fold)
  \centering
  \includegraphics[width=.4\textwidth]{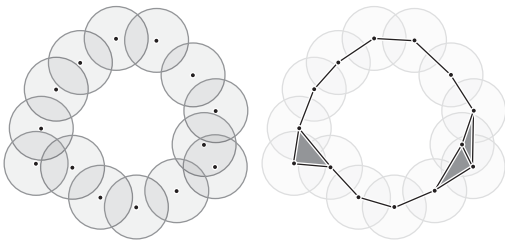}
  \caption{\label{fig:cech_complex}
    A union of balls and its corresponding \v Cech complex.
  }\label{fig::cech}
\end{figure} % cech_complex (end)
Since the coefficient ring is a field, the computations may be
completely described by linear algebra. The groups $C_p$, $Z_p$,
$B_p$, and $H_p$ are vector spaces and the boundary operators are
linear maps.  It is possible to efficiently compute the homology
groups of a simplicial complex in time polynomial in the size of the
complex. The algorithm only involves row reduction on the matrix
representations of $\bdy_p$.

\vspace{-.3cm}
\section{Techniques}
\vspace{-.3cm}
\label{sec::prelim}
\subsection{Techniques for lower bounds}
The {\em total variation distance} between two measures $P$ and $Q$ is defined by
${\sf TV}(P,Q) = \sup_A |P(A) - Q(A)|$
where the supremum is over all measurable sets.
It can be shown that
${\sf TV}(P,Q) = P(G) - Q(G) = 1 - \int \min (P,Q)$
where $G = \{y:\ p(y) \geq q(y)\}$
and $p$ and $q$ are the densities of $P$ and $Q$
with respect to any measure $\mu$ that dominates both $P$ and $Q$.

We shall make repeated use of Le Cam's lemma 
which we now state (see, e.g., \cite{bin_yu}).

\begin{lemma}[{\bf Le Cam}]
\label{lemma::lecam}
Let ${\cal Q}$
be a set of distributions.
Let $\theta(Q)$ take values in a metric space with metric $\rho$.
Let $Q_1,Q_2\in {\cal Q}$ be any pair of distributions in ${\cal Q}$.
Let $Y_1,\ldots, Y_n$ be drawn iid from some $Q\in {\cal Q}$
and denote the corresponding product measure by
$Q^n$.
Then
\begin{eqnarray*}
\inf_{\hat\theta}
\sup_{Q\in {\cal Q}} 
& & \mathbb{E}_{Q^n}\Bigl[\rho(\hat\theta,\theta(Q))\Bigr]  \geq \\
%& & \frac{1}{2} \rho(\theta(Q_1),\theta(Q_2)) \  ||Q_1^n \wedge Q_2^n|| \geq \\
& & \frac{1}{8} \rho(\theta(Q_1),\theta(Q_2)) \  (1- {\sf TV}(Q_1,Q_2))^{2n}
\end{eqnarray*}
where the infimum is over all estimators.
\end{lemma}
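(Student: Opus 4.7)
The plan is to run the standard two-point reduction and then tensorize the total variation bound through Hellinger affinity. First I would restrict the supremum to the two-point subfamily $\{Q_1, Q_2\}$, which immediately yields
\[
\sup_{Q\in\mathcal{Q}} \mathbb{E}_{Q^n}\bigl[\rho(\hat\theta,\theta(Q))\bigr] \geq \tfrac{1}{2}\bigl(\mathbb{E}_{Q_1^n}[\rho(\hat\theta,\theta_1)] + \mathbb{E}_{Q_2^n}[\rho(\hat\theta,\theta_2)]\bigr),
\]
with $\theta_i = \theta(Q_i)$. This is the key move: replace a hard supremum over an unknown class by a quantity that only depends on the two comparison distributions.

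Next I would invoke the triangle inequality pointwise, $\rho(\hat\theta(y),\theta_1) + \rho(\hat\theta(y),\theta_2) \ge \rho(\theta_1,\theta_2)$, and integrate both sides against $\min(q_1^n, q_2^n)$. Using the trivial bound $\int \rho(\hat\theta,\theta_i) \min(q_1^n,q_2^n)\, d\mu \leq \mathbb{E}_{Q_i^n}[\rho(\hat\theta,\theta_i)]$, this produces
\[
\mathbb{E}_{Q_1^n}[\rho(\hat\theta,\theta_1)] + \mathbb{E}_{Q_2^n}[\rho(\hat\theta,\theta_2)] \geq \rho(\theta_1,\theta_2)\,\bigl(1-{\sf TV}(Q_1^n,Q_2^n)\bigr),
\]
using the identity ${\sf TV}(P,Q) = 1 - \int\min(p,q)$ stated earlier in the excerpt.

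The main obstacle, and the only nonobvious ingredient, is turning the joint total-variation term $1-{\sf TV}(Q_1^n,Q_2^n)$ into a power of the single-sample quantity $1-{\sf TV}(Q_1,Q_2)$, since TV does not tensorize directly. I would route through the Hellinger affinity $A(P,Q)=\int\sqrt{pq}\,d\mu$. Two classical inequalities do the job: (i) Le Cam's inequality $\int\min(p,q)\geq \tfrac{1}{2} A(P,Q)^2$, applied to the product measures; (ii) the tensorization identity $A(Q_1^n,Q_2^n) = A(Q_1,Q_2)^n$, which follows from Fubini on the product density; and (iii) the pointwise bound $\sqrt{pq}\geq \min(p,q)$, which gives $A(Q_1,Q_2) \geq 1-{\sf TV}(Q_1,Q_2)$. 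Chaining these yields
\[
1-{\sf TV}(Q_1^n,Q_2^n) \;\geq\; \tfrac{1}{2}\,\bigl(1-{\sf TV}(Q_1,Q_2)\bigr)^{2n}.
\]

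Finally I would combine the three displays: the two-point reduction, the triangle-inequality integration, and the Hellinger tensorization, which produces the stated inequality (the constant $\tfrac{1}{8}$ comes from the product $\tfrac{1}{2}\cdot\tfrac{1}{2}\cdot\tfrac{1}{2}$, and is slightly loose relative to the sharpest $\tfrac{1}{4}$ form, but matches the assertion in the lemma). The rest is bookkeeping; no specific structure of the manifold class $\mathcal{Q}$ or of the homology functional $\theta$ is used, which is exactly what makes Le Cam's lemma a convenient black box for the downstream lower bounds in Section~\ref{sec:rates}.
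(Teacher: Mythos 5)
The paper does not supply a proof of Lemma~\ref{lemma::lecam}: it is stated as a cited result (``see, e.g., \cite{bin_yu}''), so there is no in-paper argument to compare against. Your proof is a correct, self-contained derivation along the standard route: two-point reduction, triangle-inequality integration against $\min(q_1^n,q_2^n)$, and tensorization through the Hellinger affinity. Each of the three auxiliary facts you invoke --- Le Cam's inequality $\int\min(p,q)\geq\tfrac12\bigl(\int\sqrt{pq}\bigr)^2$, the product rule $A(Q_1^n,Q_2^n)=A(Q_1,Q_2)^n$, and the pointwise bound $\sqrt{pq}\geq\min(p,q)$ giving $A(Q_1,Q_2)\geq 1-{\sf TV}(Q_1,Q_2)$ --- is correct and is deployed correctly, and the tensorization step is indeed the one nontrivial ingredient.

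One small bookkeeping slip in your final paragraph: the chain you wrote yields the constant $\tfrac12\cdot\tfrac12=\tfrac14$ (one factor from averaging the two risks, one from Le Cam's inequality on the product measures), not $\tfrac12\cdot\tfrac12\cdot\tfrac12=\tfrac18$; there is no third factor of $\tfrac12$. Since $\tfrac14\geq\tfrac18$, your argument actually proves a slightly stronger bound than the lemma states, so the conclusion is unaffected --- but the claim that $\tfrac18$ ``comes from the product $\tfrac12\cdot\tfrac12\cdot\tfrac12$'' misattributes the constant. The lemma's $\tfrac18$ is simply a looser choice than your derivation delivers, which is common in statements of Le Cam's method.
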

Le Cam's lemma makes precise the intuition that if there are
\emph{distinct} members of the class $\mathcal{Q}$ for which the data generating 
distributions are close 
then the statistical problem
%of choosing between the alternatives 
is hard given a small sample.

When we apply Le Cam's lemma in this paper,
$Q_1$ and $Q_2$ will be associated with two different manifolds
$M_1$ and $M_2$.
We will take $\theta(Q)$ to be the homology of the manifold and
$\rho(\theta(Q_1),\theta(Q_2)) =1$ if the homologies are the different and
$\rho(\theta(Q_1),\theta(Q_2)) =0$ if the homologies are the same.
The subtlety of establishing \emph{tight} lower bounds 
boils down to the task of finding a set of distributions in the class 
$\mathcal{Q}$ for which the homology of the underlying submanifolds
are distinct but whose empirical distributions are hard to distinguish
from a small number of samples.

We will use two representative manifolds $M_1$ and $M_2$ in the application of
LeCam's lemma which we describe here. See Figure \ref{fig::two-manifolds}.
The manifold $M_1$ is a pair of $1-\tau$ $d$-balls (shown in blue) embedded $2\tau$ apart in $\mathbb{R}^D$ joined smoothly
at their ends (shown in red). The manifold $M_2$ is a pair of $d$-annuli (shown in blue) embedded $2\tau$ apart with outer radius $1-\tau$ and inner radius $4\tau$, smoothly joined at both the inner and outer ends (shown in red).
It is clear from the construction that both these manifolds are d-dimensional compact, have no boundary and have
condition number $\tau$. It is also the case that ${\cal H}(M_1)\neq {\cal H}(M_2)$.

%Let $M_1$ denote the unit $d$-ball embedded in $\mathbb{R}^D$.
%Let $M_2$ be $d$-dimensional annulus with outer radius 1 and inner radius $\tau$, 
%embedded in $\mathbb{R}^D$ and centered at the origin.

\begin{figure*}[bht]
\begin{center}
\includegraphics[scale=.35]{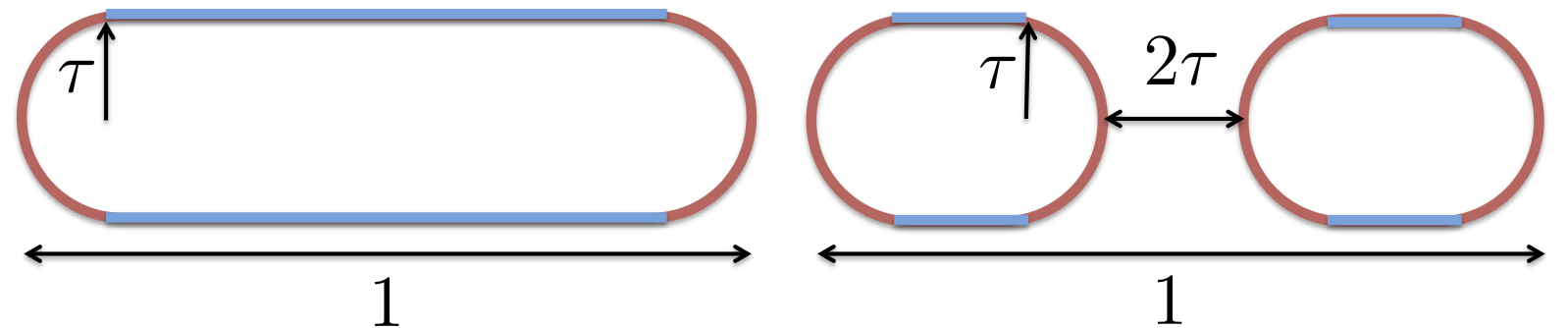}
\end{center}
\caption{The two manifolds $M_1$ and $M_2$, with d = 1, D = 2}
\label{fig::two-manifolds}
\end{figure*}
If there exist two manifolds
$M_1$ and $M_2$ with corresponding distributions
$Q_1$ and $Q_2$ in ${\cal Q}$
such that (i) ${\cal H}(M_1) \neq {\cal H}(M_2)$
and (ii) $Q_1=Q_2$ then
we say that the model ${\cal Q}$ is
{\em non-identifiable}.
In this case, recovering the homology is impossible and
we write $R_n=1$ and $n(\epsilon)=\infty$.
\vspace{-.1cm}
\subsection{Techniques for upper bounds}
\vspace{-.1cm}
To establish an upper bound we need to construct an estimator that achieves the upper bound. 
In the noiseless and tubular noise cases the samples are in a thin region around the manifold and our estimator is constructed from a union of balls (of a carefully chosen radius) around the sample points.

In the case of clutter noise and additive Gaussian noise samples are concentrated around the manifold but a few samples may be quite far away from the manifold. 
In these cases our upper bounds  are obtained by analyzing the performance of the Algorithm \ref{clean} ({\sf CLEAN}) with a carefully specified threshold and radius, which is used to remove points in regions of low density far away from the manifold. Our estimator is then constructed from a union of balls around the remaining points.
\begin{algorithm}
\caption{{\sf CLEAN}}
\label{clean}
\begin{itemize}
\item IN: $(X_i)_{i=1}^n$, threshold $t$, radius $r$
\end{itemize}
\begin{packed_enum}
\item Construct a graph $G_r$ with nodes $\{ X_i\}_{i=1}^n$. 
Include edge $(X_i,X_j)$, if $||X_i - X_j|| \leq r$.
\item Mark all vertices with degree $d_i \leq (n-1)t$.
\end{packed_enum}
\begin{itemize}
\item OUT: All unmarked vertices
\end{itemize}
\end{algorithm}
In the case of additive noise with general known distribution the samples are not expected to be concentrated around the manifold. We will first use deconvolution to estimate 
a deconvolved measure $\hat{P}_{n}$ which we will show is densely concentrated
in a thin region around the manifold. We will then draw samples from this measure, clean them and construct 
a union of balls of appropriate radius around the remaining samples, and show that this set has the right homology with high probability.

We now briefly review statistical deconvolution. We refer the interested reader to the work of Fan \cite{fan_deconvolution} for more details and to \cite{koltchinskii} for an application related to ours. The procedure is similar to kernel density estimation with a kernel modified
to account for the additive noise.
For symmetric noise distributions $\Phi$, we consider two kernels ${\cal K}$ and ${\Psi}$ 
%(${\cal K}$ and $\Psi$ are typically chosen to be ``elegant'' in the sense of \cite{devroyeelegant}) 
such that ${\cal K} \star \Phi = \Psi$, where $\star$ denotes convolution. 
%Given $n$ samples
The deconvolution estimator is $\hat{P}_n(A) = 1/n \sum_{i = 1}^n {\cal K}(Y_i - A)$.
It is easy to verify that $E\hat{P}_n = P \star \Psi$ similar to regular kernel density estimation with the kernel $\Psi$.
In the noiseless case we can even take ${\cal K} = \Psi = \delta_0$ (a Dirac at 0) and get back the empirical distribution of the sample. More generally, we will be interested in 
$\Psi$ that satisfies 
%\begin{equation}
%\label{decon_tail}
$\Psi \{ x: |x| \geq \epsilon\} \leq \gamma$.
%\end{equation}
for $\epsilon$ and $\gamma$ that we will later specify.

%Our upper bounds follow two main techniques. When the samples are concentrated around
%the manifold (but with some samples possibly quite far away from it) we show that carefully cleaning
%the samples allows us to recover the correct homology. In the case of known additive noise we show that this condition isn't necessary, and we can instead use deconvolution to obtain a measure which is concentrated around the manifold. We can draw samples from this deconvolved measure and then apply the cleaning procedure and use the cleaned samples to estimate the homology of the manifold.

In each of the above cases our final estimator is constructed from a union of balls around appropriate points,
and our theorems will show that these have the correct homology with high
probability. To compute the homology one would construct the corresponding
\v Cech complex and compute its ``boundary matrices'' (as described in Section \ref{sec::homology}). Recovering
the homology from these matrices consists of linear algebraic manipulation. 
%Although,
%the complexity is polynomial in the number of simplices in the complex (which can be quite large)
There are several fast algorithms to compute the homology (either exactly \cite{plex} or
approximately \cite{witness}) of the \v Cech complexes from large point sets in high dimensions.

\vspace{-.4cm}
\section{Minimax Rates}\label{sec:rates}
\vspace{-.3cm}
We now derive the minimax rates for homology estimation under the four noise models described in section \ref{sec::model}.
% In each case, we determine the lower bound by first computing an upper bound on the total variation distance between two carefully chosen probability measures, and then by applying Le Cam's lemma.
There are three quantities of interest: the minimax risk $R_n$,  the resolution $\tau_n$
and the sample complexity $n(\epsilon)$.
We write $R_n \asymp a_n$ (similarly for $\tau_n \asymp a_n$)
if there are positive constants $c$ and $C$ such that
$c \leq R_n/a_n \leq C$ for all large $n$.
Similarly, we write
$n(\epsilon) \asymp a(\epsilon)$
if there are positive constants $c$ and $C$ such that
$c \leq n(\epsilon)/a(\epsilon) \leq C$ for all small $\epsilon$.
Our analysis will show that the rates (as a function of $n$)
are typically polynomial for the resolution and exponential for the risk.
We will often match upper and lower bounds on sample complexity and resolution
only up to logarithmic factors, and correspondingly those on the risk upto polynomial factors. In this
case we will use the notation $R_n \asymp^\ast a_n$, $\tau_n  \asymp^\ast a_n$, and $n(\epsilon) \asymp^\ast a(\epsilon)$.

It is worth emphasizing at this point that despite the fact that we use two specific manifolds
in the application of Le Cam's lemma, the resulting lower bound holds for \emph{all}
manifolds in $\cal M$ and \emph{all} distributions in $\cal Q$. 
Le Cam's lemma allows one to get a
lower bound that holds for \emph{any} estimator
 by using \emph{two} carefully chosen
distributions in $\cal Q$. The upper bounds are from specific estimators
and they establish an
upper bound on the number of samples to estimate the homology of any
manifold in our class.

%Except for the noiseless and tubular noise case, the  
\vspace{-.2cm}
\subsection{Noiseless Case}
\vspace{-.2cm}
\label{sec::noiseless}
\begin{theorem}
\label{thm:noiseless}
For all $\tau \leq \tau_0(a,d)$, in the noiseless case the minimax rate,
$R_n \asymp^\ast e^{-n \tau^d}$, where $\tau_0(a,d)$ is a constant which depends
on $a$ and $d$. Also,
$n(\epsilon) \asymp^\ast \tau^{-d}\log(1/\epsilon)$ and
$\tau_n \asymp^\ast ( (1/n)\log(1/\epsilon) )^{1/d}$.
\end{theorem}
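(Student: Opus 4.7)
The plan has two parts: an upper bound via a sample-density argument combined with the Niyogi--Smale--Weinberger (NSW) deformation-retract theorem, and a matching lower bound via Le Cam's lemma applied to the two manifolds $M_1, M_2$ of Figure \ref{fig::two-manifolds}. From the resulting two-sided bound on $R_n$, the claims for $n(\epsilon)$ and $\tau_n$ follow by inversion.

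\emph{Upper bound.} Fix a radius $\epsilon = c_0\tau$ with $c_0$ small enough that NSW applies: whenever the sample $\{Y_i\}$ is $(\epsilon/2)$-dense in $M$, the union $U = \bigcup_i B_\epsilon(Y_i)$ deformation retracts onto $M$ and so $\hat{\cal H} = {\cal H}(U) = {\cal H}(M)$. To guarantee density with high probability, I would take a minimal geodesic $(\epsilon/4)$-covering $\{z_1,\ldots,z_N\}$ of $M$. A standard volume-packing argument, using $\vol(M) \leq C_{D,d}$ and curvature-comparison estimates (valid because $\kappa(M)\geq\tau$), gives $N = O(\tau^{-d})$ and shows that each geodesic $(\epsilon/4)$-ball has $d$-volume at least $c\,v_d\epsilon^d$. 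Since $p\geq a$, each such ball has $P$-measure at least $c_2\tau^d$. A union bound yields
\[
R_n \;\leq\; N(1-c_2\tau^d)^n \;\leq\; C_3\,\tau^{-d}\,e^{-c_2 n\tau^d},
\]
and the polynomial prefactor $\tau^{-d}$ is absorbed by the $\asymp^*$ notation.

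\emph{Lower bound.} I would apply Lemma~\ref{lemma::lecam} with $Q_1, Q_2$ being the uniform distributions on $M_1, M_2$ respectively, normalized so that their densities lie between $a$ and some constant $C_a$ depending only on $a,d,D$. Both manifolds lie in $\cal M(\tau)$ and have distinct homology. The construction in Figure \ref{fig::two-manifolds} can be arranged so that $M_1$ and $M_2$ coincide outside the two ``inner'' regions (where one has a disk and the other an annulus), each of which has $d$-volume $O(\tau^d)$. Hence $\mathrm{TV}(Q_1,Q_2) \leq C_a\,\vol(M_1\triangle M_2) = O(\tau^d)$, and Le Cam gives
\[
R_n \;\geq\; \tfrac{1}{8}\bigl(1 - C\tau^d\bigr)^{2n} \;\geq\; c\,e^{-C' n\tau^d}
\]
for $\tau \leq \tau_0(a,d)$.

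\emph{From risk to resolution and sample complexity.} Setting $R_n = \epsilon$ in the upper bound gives $n(\epsilon) = O(\tau^{-d}\log(1/\epsilon))$ (the $\log(1/\tau)$ contribution from the $\tau^{-d}$ prefactor is absorbed into $\asymp^*$), and the lower bound gives the matching $\Omega(\tau^{-d}\log(1/\epsilon))$. Solving for $\tau$ in terms of $n,\epsilon$ yields $\tau_n \asymp^* ((1/n)\log(1/\epsilon))^{1/d}$.

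The principal obstacle is the lower-bound construction: one must verify that the smooth joinings in $M_1, M_2$ can be executed without dropping the condition number below $\tau$ (the joinings live on a scale $\sim\tau$, so the curvature bound is tight but achievable), and that the symmetric difference $M_1\triangle M_2$ really has $d$-volume only $O(\tau^d)$. On the upper-bound side, the delicate point is the curvature-comparison estimate showing geodesic balls of radius $\ll\tau$ on $M$ have $d$-volume $\Theta(\epsilon^d)$ with dimension-only constants, which is a standard consequence of $\kappa(M)\geq\tau$.
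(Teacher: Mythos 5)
Your proposal is correct and follows essentially the same route as the paper: Le Cam's lemma applied to the two manifolds $M_1, M_2$ of Figure~\ref{fig::two-manifolds} for the lower bound, the NSW covering/deformation-retract machinery with $\epsilon \asymp \tau$ for the upper bound, and inversion of the two-sided risk bound to obtain $n(\epsilon)$ and $\tau_n$. The only minor deviation is in the lower-bound density choice: you take uniform densities on $M_1$ and $M_2$, whereas the paper constructs non-uniform densities that agree exactly on $M_1 \cap M_2$ and equal the lower bound $a$ on the larger of $W_1=M_1\setminus M_2$, $W_2=M_2\setminus M_1$, yielding ${\sf TV}=a\max(\vol(W_1),\vol(W_2))$ and hence a lower bound whose dependence on $a$ matches the upper bound's; with $a$ treated as a fixed constant (as the theorem does), both constructions give the stated $\asymp^*$ rate.
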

We provide proof sketches for the lower and upper bounds
on $R_n$ separately.
\subsubsection*{Lower Bound: Proof Sketch}
To obtain a lower bound on the minimax risk over the class ${\cal Q}(\tau)$
we will consider the two carefully chosen manifolds $M_1$ and $M_2$ described
earlier.

We further need to specify the density on each of the manifolds, and we
choose two densities from ${\cal P}$ so that the data distributions are as similar
as possible while respecting the constraint $p(x) \geq a$. The construction is described
in more detail in the Appendix \ref{app:noiselesslower}, but for now it suffices to notice that
the two densities can be constructed to differ only on the sets $W_1 = M_1 \setminus M_2$ and 
$W_2 = M_2 \setminus M_1$ and can be made as low as $a$ on one of these sets. A straightforward calculation shows that $${\sf TV}(p_1,p_2) \leq a \max( \vol(W_1), \vol(W_2)) \leq  C_d a\tau^d$$ where the constant $C_d$ depends on $d$. Now, we apply Le Cam's lemma to obtain that $$R_n \geq \frac{1}{8} \left( 1 - C_d a \tau^d \right)^n  \geq 
\frac{1}{8}e^{-2 C_d n a \tau^d}$$
for all $\tau \leq \tau_0(a,d)$. $\tau_0(a,d)$ is a constant depending on $a$ and $d$. The lower bound of Theorem \ref{thm:noiseless} follows.

%Let $Q_1=P_1$ be uniform on $M_1$ and let
%$Q_2=P_2$ be uniform on $M_2$.
%Let $p_1$ and $p_2$ be their densities with respect to the volume form on $M_1$.
%It is easy to see that 
%$B = \{ p_1 > p_2\}$ 
%corresponds precisely to the hole in the annulus,
%where $p_2=0$.
%Hence,
%${\sf TV}(P_1,P_2) = P_1(B) - 0 = \tau^d.$
%From Le Cam's lemma,
%$R_n \geq \frac{1}{2} \left( 1 - \tau^d \right)^n  \geq 
%\frac{1}{2}e^{-2n \tau^d}$
%(for all small $\tau$).
\subsubsection*{Upper Bound: Proof Sketch}
In the noiseless case the samples are densely concentrated around the manifold
and our estimator is constructed from a union of balls of radius $\tau/2$ 
around the sample points. 
The upper bound on the minimax risk 
follows from a straightforward modification of the results of \cite{niyogi2008}. 
For completeness, we reproduce an adaptation of their main homology inference theorem (Theorem 3.1) here.
\begin{lemma} 
\label{lem:nswmain}
[NSW]
%Let $M$ be compact submanifold of $\mathbb{R}^D$
%with condition number $\tau$. 
%Let $X_1,\ldots, X_n \sim P$
%where $P$ is supported on $M$, and $\inf p(x) = a$.
Let $0 < \epsilon < \tau$ and
let $U = \bigcup_{i=1}^n B_\epsilon(X_i)$. 
Let $\hat{\cal H} = {\cal H}(U)$.
Let 
$\zeta_1 = \frac{{\sf vol}(M)}{a \cos^d \theta_1 {\sf vol}(B^d_{\epsilon/4})}$, 
$\zeta_2 =  \frac{{\sf vol}(M)}{a \cos^d \theta_2 {\sf vol}(B^d_{\epsilon/8})}$,
$\theta_1 = \sin^{-1} \frac{\epsilon}{8\tau}$ and $\theta_2 = \sin^{-1} \frac{\epsilon}
{16\tau}$.
Then for all 
$n > \zeta_1 \left( \log(\zeta_2) + \log \left( \frac{1}{\delta} \right) \right)$,
$\mathbb{P}(\hat{\cal H} \neq {\cal H}(M)) < \delta$.
%In fact,
%$\mathbb{P}(\hat{\cal H} \neq {\cal H}(M)) \leq
%\exp\left( - \frac{n}{\zeta_1  \log(\zeta_2) }\right)$.
\end{lemma}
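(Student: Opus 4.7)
My plan is to follow the standard two-step NSW strategy: a deterministic homotopy-type result combined with a probabilistic covering argument. The deterministic step says that if the sample is sufficiently dense on $M$, then the $\epsilon$-offset of the sample is homotopy equivalent to $M$; the probabilistic step shows that the density condition holds with probability at least $1-\delta$ as soon as $n$ exceeds the stated threshold.

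For the deterministic step, I would first record the following geometric fact (which is essentially NSW's main deformation retract lemma): provided $\epsilon < \tau$ and the sample points $X_1,\ldots,X_n \in M$ are $\frac{\epsilon}{2}$-dense in $M$ (meaning that every $p \in M$ has some $X_i$ with $\|p-X_i\| < \epsilon/2$), the nearest-point projection $\pi_M : U \to M$ is well-defined and continuous, and the straight-line homotopy $h_t(x) = (1-t)x + t\pi_M(x)$ deformation retracts $U$ onto $M$. The condition number hypothesis $\kappa(M) \geq \tau$ is exactly what keeps the medial axis bounded away from $U$ so that $\pi_M$ is defined and the homotopy never leaves $U$. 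Granting this, $\widehat{\cal H} = {\cal H}(U) = {\cal H}(M)$ whenever the $\frac{\epsilon}{2}$-density condition holds.

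The probabilistic step is a covering-and-coupon-collector argument. First I would produce an $\epsilon/8$-net $\{c_1,\ldots,c_N\} \subset M$ of $M$ of minimal cardinality, so that every $p\in M$ lies within $\epsilon/8$ of some $c_j$. A standard packing argument bounds $N$ by $\vol(M)$ divided by the minimum volume of $B_{\epsilon/16}(c_j)\cap M$; the $\cos^d\theta_2$ factor with $\theta_2 = \sin^{-1}(\epsilon/(16\tau))$ comes from the fact that a small ambient ball of radius $r$ around a point of a manifold of reach $\tau$ projects onto its tangent space as a region containing a flat $d$-ball of radius $r\cos\theta$ (the worst case is governed by the reach bound). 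This gives $N \leq \zeta_2$. Next, define the events $E_j = \{$some $X_i \in B_{\epsilon/4}(c_j) \cap M\}$. Using the density lower bound $p \geq a$, the probability that a single sample lies in $B_{\epsilon/4}(c_j)\cap M$ is at least $a\cos^d\theta_1 \vol(B^d_{\epsilon/4})/\vol(M) = 1/\zeta_1$, so $\mathbb{P}(E_j^c) \leq (1-1/\zeta_1)^n \leq e^{-n/\zeta_1}$. A union bound then gives
\[
\mathbb{P}\Bigl(\bigcup_{j=1}^N E_j^c\Bigr) \leq \zeta_2 \, e^{-n/\zeta_1},
\]
which is below $\delta$ exactly when $n > \zeta_1(\log\zeta_2 + \log(1/\delta))$. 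On the complementary event, for any $p\in M$, pick $c_j$ with $\|p-c_j\| < \epsilon/8$ and the sample $X_i \in B_{\epsilon/4}(c_j)$: then $\|p-X_i\| \leq 3\epsilon/8 < \epsilon/2$, establishing the $\frac{\epsilon}{2}$-density condition required by the deterministic step.

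The main obstacle, and the part I would spell out most carefully, is the geometric volume lower bound producing the $\cos^d\theta_1$ and $\cos^d\theta_2$ factors: this is where the reach hypothesis enters quantitatively, via the estimate that the angle between the tangent space at $c_j$ and the secant $c_j x$ for $x \in B_r(c_j)\cap M$ is at most $\sin^{-1}(r/(2\tau))$, which controls the distortion between the curved and flat $d$-balls. Everything else — the homotopy, the covering, the union bound — is standard once this estimate is in hand, and the resulting inequality $n > \zeta_1(\log \zeta_2 + \log(1/\delta))$ is precisely the bound claimed in the lemma.
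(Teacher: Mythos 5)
Your proposal correctly reconstructs the two-stage NSW argument — a deterministic deformation-retract step (their Proposition 3.1) combined with a coupon-collector covering step (their Lemma 5.1 and the packing bound, which appear in this paper as Lemma~\ref{sample} and Lemma~\ref{lem:nsw_vol}). Since the paper states this lemma as a direct adaptation of NSW's Theorem 3.1 and does not re-prove it, there is no independent ``paper proof'' to diverge from; your reconstruction tracks the cited source, and the union-bound arithmetic $\zeta_2 e^{-n/\zeta_1} \le \delta \Leftrightarrow n > \zeta_1(\log\zeta_2 + \log(1/\delta))$ and the $3\epsilon/8 < \epsilon/2$ density check are both correct.

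Two small caveats worth noting, both of which you inherit from the lemma statement rather than introduce yourself. First, NSW's deformation retract actually requires $\epsilon < \sqrt{3/5}\,\tau$, not merely $\epsilon < \tau$; the straight-line homotopy $h_t(x)=(1-t)x+t\pi_M(x)$ can leave $U$ for $\epsilon$ close to $\tau$, and verifying it stays inside $U$ is the genuinely hard part of NSW's argument, which you rightly flag as the main obstacle but only sketch. This is harmless here because the paper only ever invokes the lemma with $\epsilon = \tau/2$. Second, with a density satisfying $p \geq a$ rather than $p = 1/\mathrm{vol}(M)$, the sharp lower bound on the sampling probability is $a\cos^d\theta_1\,\mathrm{vol}(B^d_{\epsilon/4})$ with no $\mathrm{vol}(M)$ in the denominator; you carry the $\mathrm{vol}(M)$ over from the stated $\zeta_1$, which is a valid (if conservative) bound only when $\mathrm{vol}(M) \geq 1$. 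Again this matches the paper's own normalization, so it is not a defect of your proof, but if you wanted to tighten the argument you would drop that factor.
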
 
By assumption ${\sf vol}(M) \leq C_{D,d}$ for some constant $C_{D,d}$ depending on
$d$ and $D$. 
To obtain a sample complexity bound we simply choose  $\epsilon =  \tau/2$
and this gives us $n(\epsilon) \leq C_1/ (a \tau^d) (C_2 \log (1/ (a\tau^d)) + \log(1/\epsilon))$
which matches the lower bound upto the factor of $\log(1/\tau)$.
Further calculation (see Appendix \ref{app:noiselessup}) 
then shows that as desired $R_n \leq C_1/\tau^d \exp (- C_2 n a \tau^d)$ for appropriate constants
$C_1, C_2$, and $\tau_n \leq C (\frac{\log n \log (1/\epsilon)}{an})^{1/d}$. This establishes 
Theorem \ref{thm:noiseless}.
%
%Then
%$\zeta_1  \log(\zeta_2) \leq (C_1/\tau^d) \log(C_2/\tau^d)$.
%The upper bound then follows
%from the theorem above.

\vspace{-.2cm}
\subsection{Clutter Noise}
\vspace{-.2cm}
\label{sec::clutter}
\begin{theorem}
\label{thm:clutter}
For all $\tau \leq \tau_0(a,d)$, in the clutter noise case,
$R_n \asymp^\ast e^{-n \pi \tau^d}$, where $\tau_0(a,d)$ is a constant which depends
on $a$ and $d$. 
Also,
$n(\epsilon) \asymp^\ast(1/ (\pi \tau^d) \log (1/\epsilon)$ and
$\tau_n \asymp^\ast (1/(n \pi) \log(1/\epsilon) )^{1/d}$.
\end{theorem}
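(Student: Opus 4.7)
I will apply Le Cam's lemma with the same pair of manifolds $M_1,M_2$ used in the noiseless proof sketch and with distributions $Q_i=(1-\pi)U+\pi P_i$, where $P_i\in\mathcal{P}$ is the density constructed on $M_i$ as in Appendix \ref{app:noiselesslower}. Because the clutter component $(1-\pi)U$ is common to $Q_1$ and $Q_2$, the total variation decomposes cleanly:
\begin{equation*}
{\sf TV}(Q_1,Q_2)=\pi\,{\sf TV}(P_1,P_2)\leq C_d\,\pi\,a\,\tau^d,
\end{equation*}
using the same volume estimate on $W_1=M_1\setminus M_2$ and $W_2=M_2\setminus M_1$ as before. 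Since $\mathcal{H}(M_1)\neq\mathcal{H}(M_2)$, Le Cam's lemma then gives
\begin{equation*}
R_n\geq \tfrac{1}{8}\bigl(1-C_d\pi a\tau^d\bigr)^{2n}\geq \tfrac{1}{8}e^{-2C_d n \pi a \tau^d}
\end{equation*}
for all $\tau\leq\tau_0(a,d)$, yielding the claimed lower bound on $R_n$ and, by inversion, on $\tau_n$ and $n(\epsilon)$.

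\textbf{Upper bound.} The plan is to run {\sf CLEAN} to discard most of the background clutter and then apply Lemma \ref{lem:nswmain} to the surviving points. I will pick the radius $r=\tau/16$ (or any small constant times $\tau$) and a threshold $t$ of the form $t=\frac{1}{2}\pi a v_d r^d$, chosen so that the expected degree of a manifold-close point (which behaves like $(n-1)\pi a v_d r^d$ up to the $(1-\pi)v_D r^D$ clutter contribution, negligible since $D>d$ and $r$ is small) exceeds $(n-1)t$, while the expected degree of a point far from ${\sf tube}_{2r}(M)$ (which is at most $(n-1)(1-\pi)v_D r^D$) is well below $(n-1)t$. Two Chernoff/Hoeffding bounds then give, with probability $1-\exp(-cn\pi\tau^d)$:
\begin{enumerate}
\item every sample $Y_i\in{\sf tube}_{r}(M)$ survives {\sf CLEAN};
\item every sample $Y_i\notin{\sf tube}_{2r}(M)$ is discarded.
\end{enumerate}
A union bound over a deterministic $r$-net of $M$ (of size $O(\tau^{-d})$ by the volume assumption), together with the fact that each net ball catches a sample from $P$ with probability $\geq 1-(1-\pi a v_d r^d)^n$, shows that the surviving points form an $r$-dense subset of a thin tube around $M$, again except on an event of probability at most $C\tau^{-d}\exp(-cn\pi\tau^d)$.

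\textbf{Finishing and main obstacle.} Once the surviving set is $\epsilon$-dense in a thin neighborhood of $M$ with $\epsilon<\tau/4$, Lemma \ref{lem:nswmain} (the NSW deformation-retract argument applied to the union of balls of radius $\tau/2$ about the surviving points) recovers $\mathcal{H}(M)$ correctly. Combining the two probability bounds gives $R_n\leq C\tau^{-d}\exp(-c\,n\pi\tau^d)$, which matches the lower bound up to a polynomial factor in $\tau$, giving $R_n\asymp^\ast e^{-n\pi\tau^d}$; inverting yields the stated rates for $n(\epsilon)$ and $\tau_n$. The main technical obstacle is the simultaneous calibration of $r$ and $t$: they must separate the expected degrees at manifold-close and far-from-manifold points by a margin large enough that the Chernoff deviations on \emph{both} sides are $o(\pi a v_d r^d)$ and the resulting failure probability is still $e^{-\Theta(n\pi\tau^d)}$, i.e.\ the cleaning step cannot lose anything in the exponent compared with the noiseless coupon-collector bound. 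This amounts to choosing $r$ small enough (a sufficiently small multiple of $\tau$) that $(1-\pi)r^D\ll \pi a r^d$, while still large enough that the NSW density condition is met after cleaning.
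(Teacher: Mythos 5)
Your lower bound matches the paper's exactly: same pair of manifolds, same observation that the clutter component $(1-\pi)U$ is common to $Q_1$ and $Q_2$ so the total variation scales by $\pi$, same application of Le Cam's lemma. Your upper-bound plan --- run {\sf CLEAN} to strip the background clutter, then apply the NSW deformation-retract (Lemma~\ref{lem:nswmain} / Lemma~\ref{nsw:noisymain}) to a union of balls around the survivors --- is also the paper's route, and your calibration of $r$ (a small constant multiple of $\tau$, chosen so that the $D$-dimensional clutter mass $v_D r^D$ is dominated by the $d$-dimensional manifold mass $\pi a v_d r^d$) is the right idea, as is the $\tau^{-d}$ prefactor from the union bound over a covering net, which accounts for the $\asymp^\ast$.

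One point deserves more care, and it is the one the paper singles out as crucial. You write that ``two Chernoff/Hoeffding bounds'' give failure probability $\exp(-c\,n\pi\tau^d)$. Plain Hoeffding applied to the empirical degree would only give $\exp\bigl(-c\,n(\pi\tau^d)^2\bigr)$, because the relevant deviation scale $\alpha/4$ is itself of order $\pi a v_d r^d \sim \pi\tau^d$, and Hoeffding's exponent is quadratic in the deviation. That is off by a full power of $\pi\tau^d$ and would not match your lower bound. The paper's cleaning lemma (Lemma~\ref{bern:clean}) explicitly replaces the Hoeffding argument of Niyogi--Smale--Weinberger with Bernstein's inequality precisely for this reason: for Bernoulli sums the variance proxy is also of order $\pi\tau^d$, so Bernstein's exponent is linear in $\pi\tau^d$, matching Le Cam. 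Your word ``Chernoff'' would rescue you if you meant the multiplicative Chernoff bound for Bernoulli sums (which is morally the same as Bernstein), but the ``/Hoeffding'' suggests you have not isolated this as the crux. You should make explicit that you need the variance-adaptive (Bernstein or multiplicative Chernoff) form; with that fixed, the rest of your argument goes through and coincides with the paper's.
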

\subsubsection*{Lower Bound: Proof Sketch}
The lower bound for the class ${\cal Q}(\pi, \tau)$ follows via the same construction
as in the noiseless case. In the calculation of the total variation distance  (see Appendix \ref{app:clutterlow})
we have instead
$${\sf TV}(q_1,q_2) \leq \pi a \max( \vol(W_1), \vol(W_2)) \leq  C_d \pi a\tau^d$$where $C_d$ depends on $d$.
As before the lower bound follows from the application of Le Cam's lemma.
%As in the analysis for the noisy case we get,
%$\int \min(Q_1,Q_2) = 
%1-\pi + \pi(1-\tau^d) = 1-\pi\tau^d$.
%From Le Cam's lemma,
%$R_n \geq \frac{1}{2} \left( 1 - \pi \tau^d \right)^n  \geq 
%\frac{1}{2}e^{-2 n \pi \tau^d}.$
\subsubsection*{Upper Bound: Proof Sketch}
\label{sec:clutter.upper}
As a preliminary step we clean the data samples 
to eliminate points that are far
away from, while retaining those close to, the manifold. 
Our analysis shows that Algorithm~\ref{clean} will
 achieve this, with high probability for a carefully chosen threshold and radius. 
We then show that 
taking a union of balls of the appropriate radius
around the remaining points will give us the correct homology,  with high probability.
%Let $a = \inf_{x \in M} p(x)$, which is strictly positive by assumption.
%In what follows, we will define two regions ($A$ and $B$) and show that {\sf CLEAN} with the
%appropriate threshold and radius will eliminate
We give an outline here and defer details to Appendix \ref{app:clutterup}.
\begin{packed_enum}
\item We define two regions $A = {\sf tube}_r({M})$ and
$B = \mathbb{R}^D \setminus {\sf tube}_{2r}({M})$
where $r < \frac{(\sqrt{9} - \sqrt{8}) \tau}{2}$.
\item We then invoke Algorithm {\sf CLEAN} on the data with
threshold 
$t = \left( \frac{v_D s^D (1 - \pi)}{\mathrm{vol(Box)}} +
\frac{\pi av_d r^d \cos^d \theta}{2} \right)$ 
and radius $2r$. Let $I$ be the set of vertices returned.
\item Through careful analysis we show that with high probability
$I$ contains \emph{all} the vertices
from the region $A$ and \emph{none} of the points in region $B$.
\item We further show that the retained points form a thin dense cover
of the manifold $M$, i.e.  $\Bigl\{ {M} \subset \bigcup_{i\in I} B_{2r}(X_i)\Bigr\}$.
\item Using a straightforward corollary of Lemma \ref{lem:nswmain} we show 
that this thin dense cover can be used to recover the homology of $M$ with high probability.
%Define the events
%${\cal E}_1 = \Biggl\{
%\{X_i:\ i\in I\} \supseteq \{X_i\in A\}\ \ \ {\rm and}\ \ \ 
%\{X_i:\ i\in I^c\} \supseteq \{X_i\in B\}\Biggr\}$
%and
%${\cal E}_2 = \Bigl\{ {M} \subset \bigcup_{i\in I} B_{2r}(X_i)\Bigr\}.$
%We will show that ${\cal E}_1$ and ${\cal E}_2$ both
%hold with high probability.
\end{packed_enum}

Formally, in Appendix \ref{app:clutterup} we prove the following lemma,
\begin{lemma}
\label{clutter:main}
If $n > \max(N_1,N_2)$, and $r < (\sqrt{9} - \sqrt{8}) \frac{\tau}{2}$ where
$N_1 = 4\kappa \log (\kappa)$
\begin{eqnarray*}
\mathrm{with}~~\kappa & = & \max \left(1 + \frac{200}{3\zeta} \log \left( \frac{2}{\delta} \right) ,4 \right) \\
\mathrm{and}~N_2 & = & \frac{1}{\zeta} 
\left(\log \left(\frac{\mathrm{vol}({M})}{\cos^d(\theta)v_d r^d}\right) + 
\log \left(\frac{2}{\delta}\right) \right)
\end{eqnarray*}
where $\zeta = \pi av_d r^d \cos^d (\theta)$ and $\theta = \sin ^{-1} (r/2\tau)$, 
then after cleaning the points $\{X_i:\ i\in I\}$ are
all in ${\sf tube}_{2r}({M})$ and
are $2r$ dense in ${M}$.
Let
$U = \bigcup_{i\in I}B_w(X_i)$ with
$w=r + \frac{\tau}{2}$ and let
$\hat{\cal H} = {\cal H}(U)$.
We have that $\hat{\cal H} = {\cal H}(\cal M)$
with probability at least $1-\delta$.
\end{lemma}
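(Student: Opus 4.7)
The plan is to decompose the argument into three steps: (i) a concentration argument showing that the cleaning step correctly separates points near $M$ from points far from $M$; (ii) a covering/coupon-collector argument showing that the retained sample is $2r$-dense in $M$; and (iii) an appeal to the NSW-style Lemma~\ref{lem:nswmain} to conclude that the union $U$ has the correct homology.

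For step (i), the first task is to compute expected degrees in the graph $G_{2r}$. Using the projection onto $M$, which is well defined and Lipschitz since $\tau \geq 2r$, and the density lower bound $p \geq a$, one checks that for $X_i \in A = {\sf tube}_r(M)$ the probability that an independent sample lands in $B_{2r}(X_i)$ is at least $\pi a v_d r^d \cos^d \theta$ (with $\theta = \sin^{-1}(r/2\tau)$), plus the clutter contribution $(1-\pi) v_D (2r)^D / \mathrm{vol}({\cal X})$, whereas for $X_i \in B = \mathbb{R}^D \setminus {\sf tube}_{2r}(M)$ only clutter samples can contribute and the probability is at most $(1-\pi) v_D (2r)^D / \mathrm{vol}({\cal X})$. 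The stated threshold $t$ is placed at the midpoint of these two expected degrees. A Bernstein tail bound for the degree of each $X_i$, combined with a union bound over the $n$ data points, shows that with probability at least $1-\delta/2$ every $X_i \in A$ is retained and every $X_i \in B$ is marked; solving the Bernstein inequality for the required failure probability produces the first sample-size requirement $N_1$, whose $\kappa \log \kappa$ shape is characteristic of Bernstein's inequality with a mean of order $\zeta$.

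For step (ii), I would cover $M$ by a minimal collection of geodesic balls of radius $r \cos\theta$; a standard volume comparison gives that $O(\mathrm{vol}(M)/(v_d r^d \cos^d \theta))$ such balls suffice, and the condition-number bound lets each one be realized inside a Euclidean ball of radius $r$. Each cover ball has $Q$-mass at least $\zeta = \pi a v_d r^d \cos^d \theta$, so a Chernoff/coupon-collector argument shows that with $n \geq N_2$ every cover ball contains at least one $X_i$ with probability at least $1-\delta/2$. Such an $X_i$ lies in $A \subset {\sf tube}_{2r}(M)$ and hence by step (i) is retained, so with probability at least $1-\delta$ the retained set is simultaneously contained in ${\sf tube}_{2r}(M)$ and $2r$-dense in $M$.

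For step (iii), given a set of retained points lying in ${\sf tube}_{2r}(M)$ and $2r$-dense in $M$, a straightforward corollary of Lemma~\ref{lem:nswmain} applied with radius $w = r + \tau/2$ gives $\hat{\cal H} = {\cal H}(M)$; the restriction $r < (\sqrt{9}-\sqrt{8})\tau/2$ is precisely what ensures that $w$ lies in the good range required by the deformation-retract argument, so that the balls cover a tube around $M$ yet are too small to fill in any spurious cycles. The main obstacle will be the clean Bernstein computation in step (i): one must estimate the variance of the edge indicators sharply enough that the Bernstein concentration radius is strictly less than half the gap between the two expected degrees, and then absorb the union bound over $n$ vertices without inflating the sample-size requirement by spurious factors of $n$. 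A secondary subtlety is matching the intrinsic cover of $M$ in step (ii) to the Euclidean balls used by the algorithm, which is exactly where the $\cos^d \theta$ factor enters through the condition-number assumption.
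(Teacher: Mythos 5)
Your proposal matches the paper's own proof step for step: the paper's appendix (Section~\ref{app:clutterup}) also splits the argument into a Bernstein concentration bound showing {\sf CLEAN} correctly classifies points in $A = {\sf tube}_r(M)$ versus $B = \mathbb{R}^D\setminus{\sf tube}_{2r}(M)$ (packaged as Lemma~\ref{bern:clean}), a covering argument via Lemma~\ref{sample} showing the retained points are $2r$-dense in $M$, and an application of the NSW ``noisy'' homology lemma (Lemma~\ref{nsw:noisymain}) with $R = 2r < (\sqrt{9}-\sqrt{8})\tau$ and $\epsilon = (R+\tau)/2 = w$. The one thing worth stating explicitly, which you gesture at but frame as a variance-estimation subtlety, is really a parametric separation condition: the cleaning analysis needs $\beta_s < \alpha_s/2$, i.e.\ the clutter mass $O\bigl((1-\pi)(2r)^D\bigr)$ in a $2r$-ball must be strictly dominated by the on-manifold mass $\Omega(\pi a r^d\cos^d\theta)$, which is exactly where the assumption $d<D$ (and $\tau$, hence $r$, small) enters; once that separation holds, Bernstein and the union bound over the $n$ vertices give $N_1$ of the stated $\kappa\log\kappa$ form without further subtlety.
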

Taking $r = (\sqrt{9} - \sqrt{8})\tau/4$, we obtain the sample complexity bound, $n(\epsilon) \leq 
\frac{C_1}{\pi \tau^d} (\log \frac{C_2}{\tau^d} + \log(C_3/\epsilon))$. Given this sample complexity upper bound, the upper bounds on minimax risk and resolution follow identical arguments to the noiseless case (Appendix \ref{app:noiselessup}).

\vspace{-.2cm}
\subsection{Tubular Noise}
\vspace{-.2cm}
Under this noise model we get samples 
uniformly from a tubular region of width $\sigma$ around the manifold.
This model highlights an important phenomenon in high-dimensions. Although, 
we receive samples \emph{uniformly} 
from a full $D$ dimensional shape these samples
concentrate tightly around a $d$ dimensional manifold. We show 
that with some care we can still reconstruct the homology at 
a rate independent of $D$.
\begin{theorem}
Under the tubular noise model we establish the following cases.
\begin{enum}
\item If $\sigma \geq 2 \tau$ then 
the model is non-identifiable and hence,
$R_n=1$ and $n(\epsilon)>\infty$.
%\item $D=d$ and $\sigma < \tau$.
%In this case:
%$R_n \asymp^\ast e^{-n (\tau-\sigma)^d}$.
%Hence,
%$n(\epsilon) \asymp^\ast 1/(\tau-\sigma)^d$ and
%$\tau_n \asymp^\ast  \sigma+ (\frac{1}{n}\log(1/\epsilon))^{1/d}$.
\item If $\sigma \leq C_0 \tau$, with $C_0$ small and $\tau \leq \tau_0(a,d)$,
then
$R_n \asymp^\ast e^{-n \tau^d}$, where $\tau_0(a,d)$ is a constant which depends
on $a$ and $d$. 
Also,
$n(\epsilon) \asymp^\ast 1/\tau^d$ and
$\tau_n \asymp^\ast  (\frac{1}{n}\log(1/\epsilon))^{1/d}$.
\end{enum}
\end{theorem}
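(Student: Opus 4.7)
I would handle Part 1 and Part 2 separately, using a tailored two-manifold construction in each case. For Part 1, the plan is to exhibit $M_1, M_2 \in \mathcal{M}(\tau)$ with distinct homology whose $\sigma$-tubes coincide, so that $Q_{M_1,\sigma} = Q_{M_2,\sigma}$ and no estimator can tell them apart. A modification of Figure~\ref{fig::two-manifolds} in which the inner annular radius of $M_2$ is shrunk from $4\tau$ to a value in $[\tau, \sigma]$ does the job: this interval is nonempty precisely when $\sigma \ge 2\tau$ (accounting for the two inner caps), $M_2$ still has condition number at least $\tau$, and the $\sigma$-tube of the inner caps completely fills the annular holes, forcing ${\sf tube}_\sigma(M_1) = {\sf tube}_\sigma(M_2)$. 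By the convention introduced just above Section~\ref{sec::prelim}, the model is non-identifiable and we set $R_n = 1$ and $n(\epsilon) = \infty$.

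For the lower bound in Part 2, I would apply Le Cam's lemma to the unmodified pair $M_1, M_2$ of Figure~\ref{fig::two-manifolds}. Since $\sigma \le C_0 \tau < \tau = \kappa(M_i)$, Weyl's tube formula gives $\vol({\sf tube}_\sigma(M_i)) = \Theta(\sigma^{D-d})$ with constants depending only on $d$ and $D$, and the two tube volumes agree up to a factor $1 + O(\tau^d)$. The symmetric difference of the two tubes is supported in the $\sigma$-tube of the cap modification region, a subset of $M_i$ of $d$-volume $O(\tau^d)$, so its ambient $D$-volume is $O(\tau^d \sigma^{D-d})$. Hence ${\sf TV}(Q_{M_1,\sigma}, Q_{M_2,\sigma}) \le C_d\,\tau^d$, and Le Cam's lemma yields $R_n \ge \tfrac{1}{8}(1 - C_d \tau^d)^{2n} \ge \tfrac{1}{8}\, e^{-C n \tau^d}$ for all $\tau \le \tau_0(d)$, matching the claimed exponent.

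For the upper bound in Part 2, I would define the estimator $\hat{\mathcal{H}} = \mathcal{H}(U)$ with $U = \bigcup_{i=1}^{n} B_r(Y_i)$ for a suitable $r$ slightly larger than $\sigma + \tau/4$. The argument has two steps. First, I verify that for $n \gtrsim (1/\tau^d)[\log(1/\tau^d) + \log(1/\delta)]$ the samples are $\tau/8$-dense in $M$ with probability at least $1-\delta$: take a $(\tau/16)$-net $N \subset M$ of cardinality $|N| \lesssim \vol(M)/\tau^d$, and observe that for each $x \in N$ the $Q_{M,\sigma}$-measure of $B_{\tau/8}(x)$ equals $\vol(B_{\tau/8}(x) \cap {\sf tube}_\sigma(M)) / \vol({\sf tube}_\sigma(M)) = \Theta(\tau^d/\vol(M))$, since the intersection is a slab of $D$-volume $\Theta(\tau^d \sigma^{D-d})$; a union bound over $N$ then completes this step. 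Second, I invoke a minor variant of Lemma~\ref{lem:nswmain}: because the samples lie in ${\sf tube}_\sigma(M)$ and their projections onto $M$ form a $\tau/8$-cover, $U$ deformation-retracts onto $M$ and hence $\mathcal{H}(U) = \mathcal{H}(M)$. Converting this sample-complexity bound to the stated $R_n$ and $\tau_n$ rates then proceeds exactly as in Appendix~\ref{app:noiselessup}.

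The main obstacle is the second step of the upper bound: Lemma~\ref{lem:nswmain} as stated concerns samples lying exactly on $M$, and one must verify that the underlying nerve-theorem and deformation-retract arguments still go through when sample centers are displaced from $M$ by up to $\sigma$. This is precisely where the restriction $\sigma \le C_0 \tau$ with $C_0$ small enters: it ensures that the nearest-point projection onto $M$ is well-defined and $1$-Lipschitz on each $B_r(Y_i)$, so that the perturbed ball cover and its on-manifold projected version share the same homotopy type.
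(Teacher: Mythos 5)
Your Part~2 argument tracks the paper's proof closely. For the lower bound, you use the original pair $(M_1,M_2)$ and the observation that the uniform tube densities are $\Theta(\sigma^{-(D-d)})$ while the symmetric difference of the tubes has ambient volume $\Theta(\tau^d\sigma^{D-d})$, giving ${\sf TV}=O(\tau^d)$; this is exactly the paper's Appendix calculation. For the upper bound you use a ball-mass estimate $Q(B_\epsilon(p))=\Theta(\epsilon^d)$ for $\epsilon\gg\sigma$, a covering-number/sampling argument, and an NSW-type statement for off-manifold sample centers; the paper does the same (with $\epsilon=2\sigma$ and balls of radius $\epsilon+\tau/2$), and the NSW variant you correctly flag as the potential obstacle is supplied in the Appendix (the ``noisy'' NSW lemma with the $(\sqrt{9}-\sqrt{8})\tau$ bound and ball radius $(R+\tau)/2$), so the structure is the same up to constants.

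Part~1 is where there is a genuine gap. You shrink the inner annular radius of $M_2$ to some $r_{\mathrm{in}}\in[\tau,\sigma]$ and claim the $\sigma$-tube then fills the hole, so that ${\sf tube}_\sigma(M_1)={\sf tube}_\sigma(M_2)$. This equality fails whenever $D>d$, which is the paper's standing assumption. A point at co-dimensional height $h$ above the center of the hole lies in ${\sf tube}_\sigma(M_1)$ iff $h\le\sigma$, but lies in ${\sf tube}_\sigma(M_2)$ only if $h\le\sqrt{\sigma^2-r_{\mathrm{in}}^2}<\sigma$; thus ${\sf tube}_\sigma(M_2)$ has a ``dimple'' in the co-dimensions, $Q_{M_1,\sigma}\ne Q_{M_2,\sigma}$, and these two distributions remain identifiable. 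The paper explicitly notes this obstruction and circumvents it with a different construction: both manifolds are augmented by a pair of extra components placed at $\pm 2\tau$ along a co-dimension above the hole centers, yielding $M_1'$ and $M_2'$. When $\sigma\ge 2\tau$, the $\sigma$-balls around these extra components cover the dimple, so ${\sf tube}_\sigma(M_1')={\sf tube}_\sigma(M_2')$ while ${\cal H}(M_1')\ne{\cal H}(M_2')$. Your construction as written does not yield non-identifiability and needs a repair of this kind.
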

\begin{remark}
The case when $\sigma$ is very close to $\tau$ is significantly more involved since
it involves the \emph{exact} calculation of the volume of the tubular region and establishing
tight upper and lower bounds here is an open problem we are attempting to address in current work.
\end{remark}
\subsubsection*{Lower bound: Proof Sketch}
\begin{packed_enum}
\item %When $D=d$, and $\sigma \geq \tau$ it is clear that $Q_1 = Q_2$
%so the model is non-identifiable.  
When $d<D$ and $\sigma \geq 2 \tau$
the two manifolds $M_1$ and $M_2$ that we have considered thus far
are still identifiable because even when $\sigma \geq \tau$ $M_2$
has a ``dimple'' along the co-dimensions that $M_1$ does
not.
To show that the class ${\cal Q}$ is still not identifiable we require a different construction.
Consider the manifolds $M_1$ and $M_2$ with two points
placed above and below the manifold at a distance $2\tau$ above their
centers along each of the co-dimensions. Denote these new manifolds
$M_1^\prime$ and $M_2^\prime$.  It is clear that 
${\cal H}(M_1^\prime) \neq {\cal H}(M_2^\prime)$, however $Q_1^\prime =
Q_2^\prime$ since the extra points hide the ``dimple'' and the two
manifolds cannot be distinguished.
%\item When $D=d$, and $\sigma < \tau$, the proof is similar to the 
%noiseless case and we defer details to the Appendix.
%We see that
%\begin{eqnarray*}
%\int \min(Q_1,Q_2) = 
%\frac{v_d \left[ (1+ \sigma)^d - (\tau - \sigma)^d \right]}
%{v_d \left[ (1 + \sigma)^d \right]} = 
%1 - \left( \frac{\tau - \sigma}{1 + \sigma} \right)^d \geq 
%1 - (\tau - \sigma)^d.
%\end{eqnarray*}
%Following through the rest of the proof
%details from the noiseless case we arrive at the conclusion of the
%theorem.

%\begin{remark}
%If $\sigma$ is at most a
%constant fraction of $\tau$ then the rate is unchanged (up to
%constants depending on $d$) from the noiseless case. 
%\end{remark}

\item When $d < D$, and $\sigma \leq C_0 \tau$ we return to our old constructions
of $M_1$ and $M_2$. There is however an important difference in that 
the two manifolds differ on full $D$-dimensional sets, and one might suspect that
${\sc TV}(q_1,q_2) = O(\tau^D)$ or perhaps $O(\sigma^{D-d} \tau^d)$. As we show
in Appendix \ref{app:tub} however, ${\sc TV}(q_1,q_2)$ is still $O(\tau^d)$,
and we recover an identical lower bound to the noiseless case.

%we see that the tubular region is cylindrical perpendicular to the manifold but
%also has ``caps" (of radius $\sigma$) along the boundary. 
%The tubular neighborhood of $M_1$ only has ``outer caps'' while
%the tubular neighborhood of annulus has two sets of caps one along its outer rim and the other along its inner rim.
%Now,
%\begin{eqnarray*}
%\int \min(Q_1,Q_2) = 
%\frac{v_d (1 - \tau^d) v_{D - d} \sigma^{D - d} + 
%\mathrm{vol(outer~caps)} + \mathrm{vol(inner ~caps)}}
%{v_d v_{D - d} \sigma^{D - d} +\mathrm{vol(outer~caps)}}
%\end{eqnarray*}
%and dividing through by $v_d v_{D-d} \sigma^{D-d}$, we have
%$\int \min(Q_1,Q_2) \geq 1 - \tau^d.$
%The inequality follows from the simple observation that adding
%$x$ to the denominator requires us to add $(1 - \tau^d)x$ to the
%numerator, but we add a larger quantity.
%Now we can follow the lower bound
%as in the noiseless case to arrive at the conclusion of the theorem. 
\end{packed_enum}

\subsubsection*{Upper bound: Proof Sketch}
We are interested in case when $\sigma \leq C_0\tau$ (in particular
$\sigma < \tau/24$ will suffice). Our proof will involve two main steps which we sketch here.
\begin{packed_enum}
\item We first show that if we consider balls of sufficiently
large radius $\epsilon$  (compared to $\sigma$) then the probability mass 
in these balls is $O(\epsilon^d)$. This is a manifestation of the phenomenon alluded
to earlier: inside large enough balls the mass is concentrated around the lower dimensional
manifold. Precisely, define $k_\epsilon = \inf_{p \in {M}} Q(B_\epsilon(p))$.  In
Lemma \ref{tubular_cover_lemma} in the Appendix, we show that, if
$\epsilon \gg \sigma$ is large, $k_\epsilon$ is of order
%$\Omega(\frac{\epsilon^d \sigma^{D-d}}{\sigma^{D-d}
 % \mathrm{vol}({M})}) = 
  $\Omega(\epsilon^d)$.

%We will need a lower bound on this quantity. It
%is easy to see that
%$k_\epsilon = \mathrm{vol}(B_\epsilon(p) \cap {M})/\mathrm{vol}({\sf tube})$.
%If $\epsilon \gg \sigma$ is large this is at least
%$O(\frac{\epsilon^d \sigma^{D-d}}{\sigma^{D-d}
%\mathrm{vol}({M})}) = O(\epsilon^d)$.  This is however not
%rigorous, and we show a rigorous proof in the appendix (see Lemma
%\ref{tubular_cover_lemma}).
\item There is however a disadvantage to considering balls that are too large. The homology of the union of balls around the samples may no longer have the right homology. Using tools from 
NSW, we show in the Appendix 
that we can balance these two considerations for manifolds with high condition number, i.e. provided $\sigma < \tau/24$, we can choose
balls that are both large relative to $\sigma$ and whose union still has the correct homology.

%We will again invoke a modified version of the homology inference theorem from NSW
%\begin{theorem}
%[Niyogi, Smale and Weinberger]
%Let $M$ be compact submanifold of $\mathbb{R}^D$
%with condition number $\tau$. 
%Let $X_1,\ldots, X_n \sim P$
%where $P$ is uniform on the tubular region of radius $\sigma$ around
%${M}$.
%Let $N_{\epsilon/4}$ be the $\epsilon/4$-covering number of the submanifold ${M}$. 
%Let $U = \bigcup_{i=1}^n B_\epsilon(X_i)$.
%Let $\hat{\cal H} = {\cal H}(U)$.
%Then if $n > \frac{1}{k_{\epsilon/4}} \left( \log(N_{\epsilon/4}) + \log(1/\delta) \right)$,
%$\mathbb{P}(\hat{\cal H} \neq {\cal H}(M)) < \delta$ as long as
%$\sigma < \tau/5$ and
%$\epsilon < \frac{\tau}{2}$.
%\end{theorem}
\end{packed_enum}
We will prove the following main lemma in the Appendix.
\begin{lemma}
Let $N_{\epsilon}$ be the $\epsilon$-covering number of the submanifold ${M}$. 
Let $U = \bigcup_{i=1}^n B_{\epsilon + \tau/2}(X_i)$.
Let $\hat{\cal H} = {\cal H}(U)$.
Then if $n > \frac{1}{k_{\epsilon}} \left( \log(N_{\epsilon}) + \log(1/\delta) \right)$,
$\mathbb{P}(\hat{\cal H} \neq {\cal H}(M)) < \delta$ as long as
$\sigma \leq \epsilon/2$ and $\epsilon < \frac{(\sqrt{9} - \sqrt{8})\tau}{2}$.
\end{lemma}
%\begin{lemma}
%Let $N_{\epsilon/4}$ be the $\epsilon/4$-covering number of the submanifold ${M}$. 
%Let $U = \bigcup_{i=1}^n B_\epsilon(X_i)$.
%Let $\hat{\cal H} = {\cal H}(U)$.
%Then if $n > \frac{1}{k_{\epsilon/4}} \left( \log(N_{\epsilon/4}) + \log(1/\delta) \right)$,
%$\mathbb{P}(\hat{\cal H} \neq {\cal H}(M)) < \delta$ as long as
%$\sigma < \tau/5$ and
%$\epsilon < \frac{\tau}{2}$.
%\end{lemma}
Notice, that we require $\sigma < \frac{(\sqrt{9} - \sqrt{8})\tau}{4}$ which is satisfied if $\sigma < \tau/24$ (for instance).
To obtain the upper bound set $\epsilon = 2\sigma$, and observe that $N_{\epsilon} = O(1/\epsilon^d) = O(1/\tau^d)$ and $k_{\epsilon} = O(\epsilon^d) = O(\tau^d)$. This gives us that if $n \geq \frac{C_1}{\tau^d} ( \log (\frac{C_2}{\tau^d}) + \log (\frac{1}{\delta}) )$ we recover the
right homology with probability at least $1-\delta$. The upper bound on minimax risk and resolution follows from similar arguments to those made previously.

\vspace{-.2cm}
\subsection{Additive Noise}
\vspace{-.2cm}
\label{sec::additive}
For additive noise we consider two cases. 
In the first case, 
we derive the minimax rates for additive \emph{Gaussian} noise 
under the somewhat restrictive
assumption that $C \sqrt{D} \sigma < \tau$. 
This problem is related of the problem of separating mixtures of
Gaussians (which corresponds to the case where the manifold is a
collection of points and $2\tau$ is the distance between the closest
pair). 
%When translated
%appropriately this is exactly the condition required by Dasgupta in
%his paper \cite{sanjoy99}. 
%This result has since been
%improved to $C D^{1/4} \sigma < \tau$, and orthogonally has also been
%made independent of $D$ and dependent on the number of mixtures to
%separate. 
In this case have the following theorem.
\begin{theorem}\label{thm:additive}
For all $\tau \leq \tau_0(a,d)$ and $8 \sqrt{D} \sigma <  \tau$,
$R_n \asymp^\ast e^{- n \tau^d}$, where $\tau_0(a,d)$ is a constant which depends
on $a$ and $d$. 
Also,
$n(\epsilon) \asymp^\ast (1/\tau^d)\log(1/\epsilon)$
and
$\tau_n \asymp^\ast \left( (1/n)\log (1/\epsilon)\right)^{1/d}$.
\end{theorem}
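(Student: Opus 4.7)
The plan is to combine the lower-bound construction used in the noiseless and clutter cases with an adaptation of the CLEAN-based upper bound from Section \ref{sec::clutter}. I would proceed in two parts.

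For the lower bound, I apply Le Cam's lemma to the same two manifolds $M_1, M_2$ depicted in Figure \ref{fig::two-manifolds}, equipped with the same two densities $p_1, p_2 \in {\cal P}$ used in the proof of Theorem \ref{thm:noiseless}, and let $Q_i = P_i \star \Phi$ for $i = 1, 2$. Since convolution with a fixed distribution is a deterministic channel, the data-processing inequality gives
\[
{\sf TV}(Q_1, Q_2) \leq {\sf TV}(P_1, P_2) \leq C_d\, a\, \tau^d.
\]
Plugging into Lemma \ref{lemma::lecam} yields $R_n \geq (1/8)\exp(-4 C_d\, n\, a\, \tau^d)$ for $\tau \leq \tau_0(a,d)$, which matches the claimed rate. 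Note in particular that the Gaussian component of the data distribution never helps the adversary here; the bound is inherited verbatim from the noiseless case.

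For the upper bound, the assumption $8\sqrt{D}\sigma < \tau$ makes the noise small relative to the reach, so the convolved measure $Q = P \star \Phi$ is concentrated in a thin tube around $M$ with only an exponentially small tail further out: the situation is qualitatively a smooth analog of clutter noise. Concretely I would: (i) fix a cleaning radius $r$ with $C\sqrt{D}\sigma < r < (\sqrt{9}-\sqrt{8})\tau/2$; (ii) use Gaussian tail bounds, together with the reach condition to lower bound the $d$-volume of $M \cap B_r(y)$ by the standard $v_d r^d \cos^d\theta$ factor appearing in Lemma \ref{lem:nswmain}, to show that on the tube $A = {\sf tube}_r(M)$ the ball mass satisfies $Q(B_r(y)) \geq c_1\, a\, v_d r^d \cos^d\theta$, whereas on $B = \R^D \setminus {\sf tube}_{2r}(M)$ it satisfies $Q(B_r(y)) \leq c_2 \exp(-c_3 r^2/\sigma^2)$; (iii) run Algorithm \ref{clean} with threshold $t$ chosen strictly between these two quantities, and invoke Bernstein/Chernoff concentration plus a union bound to prove that with probability at least $1-\delta$ CLEAN retains every sample in $A$ and discards every sample in $B$; (iv) argue that the retained samples form a $2r$-dense cover of $M$, so that the union $U = \bigcup_{i \in I} B_{r+\tau/2}(Y_i)$ has the correct homology by Lemma \ref{lem:nswmain}, exactly as in Lemma \ref{clutter:main}. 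Choosing $r = \Theta(\tau)$ then gives the sample complexity $n(\epsilon) \leq (C_1/\tau^d)(\log(C_2/\tau^d) + \log(1/\epsilon))$, from which the bounds on $R_n$ and $\tau_n$ follow by the same manipulation as in Appendix \ref{app:noiselessup}.

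The main obstacle is step (ii): quantitatively separating the densities of $Q$ inside and outside the tube. Inside $A$ one must show that $\int_M p(x)\,\phi_\sigma(y-x)\,d\vol_M(x)$ is, up to a constant, $a$ times the Gaussian mass on a local $(D-d)$-dimensional normal slice; this requires using the reach assumption to argue that $M$ is essentially flat on scales $\sqrt{D}\sigma \ll \tau$, so that the local volume integral behaves as in the flat case. Outside $A$ the separation hinges on $r/\sigma = \Omega(\sqrt{D})$, which the hypothesis $8\sqrt{D}\sigma < \tau$ guarantees and which converts the Gaussian tail into an $\exp(-\Omega(D))$ factor that is strictly smaller than $a\, v_d r^d$ whenever $D$ is large enough; for small $D$ the same bound holds by directly choosing the constant in $r > C\sqrt{D}\sigma$ large enough. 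Once this density gap is in hand, the rest of the cleaning analysis is a routine Chernoff union bound identical in structure to the proof of Lemma \ref{clutter:main}.
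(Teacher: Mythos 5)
Your outline follows the paper's own proof exactly: the lower bound is obtained via Le Cam together with a convolution-does-not-increase-TV lemma (the paper's Lemma~\ref{conv}) reducing to the noiseless TV estimate, and the upper bound proceeds via Gaussian tail/$\chi^2$ bounds combined with the reach-based ball-volume lemma, Algorithm {\sf CLEAN}, and the NSW homology recovery theorem, just as in the appendix. One small technical slip in your step~(ii): you clean with balls of radius $r$ on an inner tube of the same width $r$, but for $y$ near the boundary of ${\sf tube}_r(M)$ the set $M \cap B_r(y)$ has vanishing $d$-volume so the claimed lower bound $Q(B_r(y)) \geq c_1 a v_d r^d \cos^d\theta$ fails; the paper sidesteps this by cleaning with balls of radius $s=4r$ while taking the inner tube at width $r$ and the outer at $8r$, so that $B_s(y) \supset B_{s-r}(p)$ for the nearest manifold point $p$ and the volume bound applies at a fixed positive radius --- a trivial adjustment of the constants in your step~(i).
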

As in the clutter noise case we need to first clean the data and then
take a union of balls around the points which survive. We analyze this
procedure in the Appendix.

\vspace{-.2cm}
\subsubsection{Deconvolution}
\label{sec:decon}
\vspace{-.2cm}
%In this section we will consider the general case of known noise. To simplify our exposition and ease
%the analysis we will take $\tau$ to be fixed throughout this section, and consider the class of distributions 
%$$\mathcal{P}_{a,b} = \{ p:  0 < a \leq p(x) \leq b, \forall x \in M, \kappa(M) \geq \tau (\mathrm{fixed}) \} $$
%
%As before we observe \emph{indirect} samples from $M$. 
%Under general additive noise we do not expect to find too many samples close to the manifold
%and as a result our upper bound procedure for this case will be considerably different from that of the previous sections of this paper.
Here we consider more general \emph{known} noise distributions but
work over the class of distributions ${\cal Q}(\Phi)$ over manifolds with $\tau$ fixed.
We first use deconvolution to estimate 
a deconvolved measure $\hat{P}_{n}$ which is concentrated
around the manifold. We then draw samples from this measure, clean them and construct 
a union of balls $H$ around these samples, and show that $H$ has the right homology with
high probability.
The class of noise distributions we will consider satisfy the following assumption on its density.
\begin{assumption}
\label{ass:fourier}
Denote $ \rho(R) = \inf_{|t|_\infty \leq R} |\Phi^\star(t)|$, where 
$R > 0, |t|_\infty = \max_{1 \leq j \leq m} |t_j| $ and $\Phi^\star(t)$ is the Fourier transform of the
symmetric
noise density $\Phi$. We assume $\rho(R) > 0$.
\end{assumption}
This is a standard assumption in the literature on deconvolution (see \cite{fan_deconvolution,koltchinskii}), since as  described deconvolution requires us to divide by the Fourier transform of the noise which needs to be bounded away from 0 for the procedure to be well behaved. The assumption is satisfied by a variety of noise distributions including Gaussian noise.
Our main result says that for this broad class of noise distributions the deconvolution procedure described above will achieve an optimal rate of convergence.
\begin{theorem}
\label{thm:decon}
In the additive noise case with $\tau$ fixed for $\Phi$ satisfying Assumption \ref{ass:fourier}.
$R_n \asymp e^{-n}$.
Hence,
$n(\epsilon) \asymp \log (1/\epsilon)$.
\end{theorem}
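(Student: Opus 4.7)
The plan is to establish matching exponential rates, $R_n \asymp e^{-n}$. For the lower bound I use Le Cam on two fixed manifolds with distinct homology. For the upper bound I combine deconvolution with a grid-based thresholding estimator and apply concentration plus the NSW lemma. The exponential rate arises precisely because $\tau, d, D$ are now all fixed constants, so every quantity in sight is independent of $n$ and the problem reduces to separating constantly-many hypotheses at constant-rate exponential confidence.

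\textbf{Lower bound.} Apply Lemma \ref{lemma::lecam} to the two fixed manifolds $M_1, M_2$ of Figure \ref{fig::two-manifolds} with condition number $\tau$ and distinct homology. Equip them with uniform densities $p_1, p_2 \geq a$ to form $P_1, P_2 \in \mathcal{P}_\tau$, and set $Q_i = P_i \star \Phi$. Under Assumption \ref{ass:fourier}, $\Phi$ has a density on $\mathbb{R}^D$, so both $Q_i$ are densities and share a set of positive Lebesgue mass, yielding $\int \min(q_1,q_2) > 0$. Hence $TV(Q_1, Q_2) \leq c_0$ for a constant $c_0 < 1$ depending only on $\tau, d, D, \Phi, a$, and Lemma \ref{lemma::lecam} gives $R_n \geq \tfrac{1}{8}(1 - c_0)^{2n} = \Omega(e^{-c_1 n})$.

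\textbf{Upper bound.} Fix a smooth symmetric density $\Psi$ concentrated near the origin, with $\Psi(\{|x| > \alpha\}) \leq \gamma$ for $\alpha \ll \tau$ and small $\gamma > 0$, chosen so that $\Psi^\star$ decays faster than $1/\rho(R)$ grows (possible by taking $\Psi$ sufficiently smooth, e.g.\ a Gaussian of appropriate variance). By Assumption \ref{ass:fourier} the inverse Fourier transform of $\Psi^\star/\Phi^\star$ then produces a bounded kernel $\mathcal{K}$ with $\mathcal{K} \star \Phi = \Psi$. Form $\hat{p}_n(x) = \frac{1}{n}\sum_i \mathcal{K}(x - Y_i)$, whose expectation is $p \star \Psi$. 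Place a grid $\mathcal{G} \subset [0,1]^D$ of resolution $\alpha$, of constant cardinality $N = O(1)$. For each $x \in \mathcal{G}$ compute the bounded empirical mean $\hat{P}_n(B_\alpha(x)) = \int_{B_\alpha(x)} \hat{p}_n(u)\,du$ and retain $x$ if this exceeds a threshold $t$. For $x$ within $\alpha$ of $M$ the expectation $(P \star \Psi)(B_\alpha(x))$ is bounded below by some constant $c_+ > 0$ (using $p \geq a$ on $M$ and the fact that $\Psi$ gives positive mass to $B_{\alpha/2}(0)$), while for $x$ at distance $\geq 3\alpha$ from $M$ the expectation is at most the $\Psi$-tail mass $\gamma$; set $t = (c_+ + \gamma)/2$. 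Hoeffding applied to each bounded mean gives deviation $\geq (c_+ - \gamma)/2$ with probability $\exp(-c_2 n)$, and a union bound over the $N = O(1)$ grid points leaves the retained set $S$ inside ${\sf tube}_{3\alpha}(M)$ and $\alpha$-dense in $M$ with probability $1 - O(e^{-c_2 n})$. Taking $\alpha$ small enough (specifically $\alpha < (\sqrt{9}-\sqrt{8})\tau/12$), the NSW-style cover argument used in Lemma \ref{clutter:main} applied to $\bigcup_{x \in S} B_w(x)$ with $w = 3\alpha + \tau/2$ yields $\hat{\mathcal{H}} = \mathcal{H}(M)$, so $R_n \leq O(e^{-c_2 n})$.

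\textbf{Main obstacle.} The delicate analytic step is verifying that $\mathcal{K}$ has finite $L^\infty$ norm depending only on $\Phi$ and $\Psi$, equivalent to $\Psi^\star/\Phi^\star \in L^1(\mathbb{R}^D)$. This requires matching the decay of $\Psi^\star$ to the growth of $1/\Phi^\star$ guaranteed by Assumption \ref{ass:fourier}; the free parameter is the smoothness of $\Psi$. Once this bound is in hand, every $\hat{P}_n(B_\alpha(x))$ is a bounded empirical mean, and Hoeffding together with the NSW covering lemma finishes the proof at exponential rate because all remaining constants depend only on the fixed parameters $\tau, d, D, \Phi, a$.
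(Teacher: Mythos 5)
Your proposal is correct in substance but takes a genuinely different route from the paper on the upper bound, and a slightly different one on the lower bound. For the lower bound, the paper invokes Lemma~\ref{conv} (convolution cannot increase total variation) to reduce directly to the noiseless bound $\mathsf{TV}(P_1,P_2)\le C_d a\tau^d$, which with $\tau$ fixed is a constant strictly less than one. You instead argue directly that $Q_1=P_1\star\Phi$ and $Q_2=P_2\star\Phi$ have overlapping densities, hence $\mathsf{TV}(Q_1,Q_2)\le c_0<1$. This works, but is slightly weaker in that it relies on $\Phi$ having a density (true here, since the paper speaks of the ``noise density'' in Assumption~\ref{ass:fourier}, but not literally implied by $\rho(R)>0$); the Lemma~\ref{conv} argument is cleaner and gives an explicit constant. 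For the upper bound, the paper draws $m\ge Cn$ samples from the deconvolved measure $\hat P_n$, cleans them by thresholding $\hat P_n(B_{4\epsilon}(\cdot))$, and invokes Koltchinskii's \emph{uniform} concentration bound (Lemma~\ref{uniformconc}) together with the sampling lemma. You instead threshold on a fixed $\alpha$-grid of $[0,1]^D$ (constant cardinality since $\tau,D$ are fixed), prove concentration of $\hat P_n(B_\alpha(x))$ at each grid point by Hoeffding, and union bound. This is precisely the Koltchinskii-style grid construction that the paper explicitly chooses to move away from on the grounds of practicality, but for a minimax upper bound it is perfectly valid; moreover, your pointwise Hoeffding plus finite union bound is more elementary than the uniform empirical-process bound the paper uses, and it neatly sidesteps the question of what it means to ``draw samples from $\hat P_n$'' when $\mathcal K$ (hence $\hat P_n$) may be signed. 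Two small corrections: the ball radius in the final step should be $w=(3\alpha+\tau)/2$ (matching Lemma~\ref{nsw:noisymain}'s $(R+\tau)/2$) rather than $3\alpha+\tau/2$, and your lower bound on $(P\star\Psi)(B_\alpha(x))$ really holds only for $x$ within roughly $\alpha/4$ of $M$, not $\alpha$ — but this just means the grid spacing should be taken a small constant factor finer and does not affect the exponential rate.
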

{\bf Lower Bound: Proof Sketch}To obtain the lower bound one can consider the same construction from the previous subsection with additive Gaussian noise. If $\tau$ is taken to be fixed we obtain the desired bound.

{\bf Upper Bound: Proof Sketch}
Our proof of the upper bound follows similar lines to that of Koltchinskii \cite{koltchinskii}. We deviate in two significant aspects. Koltchinskii only assumes an upper bound on the density, which he shows is sufficient to estimate weak geometric characteristics like the dimension of the manifold. To show that we can accurately reconstruct its homology we require both an upper and lower bound and our methods are quite different. Koltchinskii uses an epsilon net of the \emph{entire} compact set containing the manifold critically in his construction and his procedure is thus not implementable/practical. Our algorithm instead draws a small number of samples from the deconvolved measure and uses those to estimate the homology resulting in a practical procedure.
We prove the following upper bound in the Appendix.
\begin{lemma}
\label{thm:decon}
Given $n$ samples from ${\cal Q}(\Phi)$ with $\Phi$ satisfying Assumption \ref{ass:fourier},
there exist $C_1,C_2,c_1 > 0$ such that
$P( {\cal H}(H) \neq {\cal H}(M) ) \leq C_1 e^{-c_1n} $, 
where $H$ is a union of balls of radius $\frac{5\epsilon + \tau}{2}$ centered around $m \geq C_2 n $ samples drawn from the deconvolved measure $\hat{P}_{n}$ with a kernel $\Psi$ with parameters $\gamma, \epsilon$ (specified in the proof). The samples are cleaned using the deconvolved measure by considering balls of radius $4\epsilon$ at a threshold $2 \gamma$.
\end{lemma}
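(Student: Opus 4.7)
The plan is to combine a uniform deconvolution concentration result with the \textsf{CLEAN} argument used in the clutter case and then invoke the NSW-style union-of-balls lemma (Lemma~\ref{lem:nswmain}). Concretely, I would organize the proof around four stages: (i) build a deconvolved measure $\hat P_n$ that concentrates on a thin tube around $M$; (ii) draw samples from $\hat P_n$ and clean them; (iii) show the surviving samples densely cover $M$; (iv) apply the homology-recovery lemma.

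First, I would choose the kernel $\cal K$ so that $\Psi = {\cal K} \star \Phi$ has $\Psi\{x: |x|\geq \epsilon\} \leq \gamma$, which is possible under Assumption~\ref{ass:fourier} by taking a bandwidth-scaled kernel whose Fourier transform is $\hat{\cal K}(t) = \hat\Psi(t)/\hat\Phi(t)$, with $\hat\Psi$ a smooth cutoff supported where $\rho(R)$ is bounded below. Since $\mathbb{E}[\hat P_n(A)] = (P\star\Psi)(A)$ and since the kernel $\cal K$ applied to a sample has bounded range, a standard VC/Rademacher argument over the class of balls of radius $O(\epsilon)$ in $\mathbb{R}^D$ (a VC class of finite dimension depending only on $D$) gives a uniform deviation bound of Bernstein type: with probability at least $1-C_1e^{-c_1 n}$, $\sup_A |\hat P_n(A) - (P\star\Psi)(A)| \leq \gamma/2$, where the sup is over all such balls.

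Second, because $P$ is supported on $M$ and $\Psi$ places mass at most $\gamma$ outside an $\epsilon$-ball, $P\star\Psi$ is essentially supported on $\mathsf{tube}_\epsilon(M)$; moreover, for $x\in M$, $(P\star\Psi)(B_\epsilon(x)) \geq a\, v_d \cos^d(\theta)\, \epsilon^d - \gamma$ for some $\theta$ controlled via the NSW cone/projection bound, and for $x$ with $d(x,M)\geq 5\epsilon$, $(P\star\Psi)(B_{4\epsilon}(x)) \leq \gamma$. Drawing $m \geq C_2 n$ i.i.d.\ samples from $\hat P_n$ and invoking a second VC concentration argument shows that the empirical density in each $4\epsilon$-ball matches $\hat P_n$, hence matches $P\star\Psi$, up to $\gamma/2$ with probability $1-C_1 e^{-c_1 m}$. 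Then \textsf{CLEAN} at radius $4\epsilon$ and threshold $2\gamma$ retains every sample within $\epsilon$ of $M$ and rejects every sample at distance $\geq 5\epsilon$ from $M$; after this step the retained points lie in $\mathsf{tube}_{5\epsilon}(M)$.

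Third, to see that the retained points are $5\epsilon$-dense in $M$, cover $M$ by an $\epsilon$-net of size $N_\epsilon = O(\mathrm{vol}(M)/\epsilon^d)$. Each ball of this net contributes $(P\star\Psi)$-mass at least $c\, a\, \epsilon^d$, and by the concentration established above, each ball receives at least one retained sample as soon as $m \gtrsim (\log N_\epsilon + \log(1/\delta))/(a\epsilon^d)$, which holds for $m \geq C_2 n$ after choosing $\epsilon$, $\tau$ as fixed constants and $C_2$ large enough.

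Finally, with the retained points lying in $\mathsf{tube}_{5\epsilon}(M)$ and forming a $5\epsilon$-cover of $M$, a direct adaptation of Lemma~\ref{lem:nswmain} (whose proof only requires a thin dense cover at some scale $<\tau$) shows that the union $H$ of balls of radius $(5\epsilon+\tau)/2$ around the retained samples deformation-retracts to $M$, so ${\cal H}(H) = {\cal H}(M)$. This holds whenever $5\epsilon < \tau$, which is guaranteed once $\epsilon$ is chosen sufficiently small relative to the fixed $\tau$. Combining the failure probabilities of the two concentration events gives the claimed $C_1 e^{-c_1 n}$ bound. The main technical obstacle is the first step: obtaining a uniform Bernstein-type bound for $\hat P_n$ on all balls of radius $O(\epsilon)$ simultaneously, since the kernel $\cal K$ obtained by Fourier inversion need not be nonnegative or bounded by a constant independent of $\epsilon$; this is where Assumption~\ref{ass:fourier} and the choice of $\hat\Psi$ as a smooth cutoff are crucial, and where we follow Koltchinskii's bounds on $\|{\cal K}\|_\infty$ in terms of $\rho(R)^{-1}$.
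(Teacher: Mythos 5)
Your proposal follows the same overall architecture as the paper's proof: deconvolve to get a measure $\hat P_n$ concentrated near $M$, control it uniformly over a family of small balls, draw $m\gtrsim n$ samples from it, clean away the far samples, verify $O(\epsilon)$-density of the survivors, and invoke the NSW-style homology lemma. The paper, however, imports the uniform control of $\hat P_n$ directly as a black-box lemma from Koltchinskii (stated as Lemma~\ref{uniformconc}), giving $P(\sup_x|\hat P_n(B_{2\epsilon}(x))-P_\Psi(B_{2\epsilon}(x))|\geq\gamma)\leq 4q^n$, rather than re-deriving it via a VC/Rademacher argument as you propose; your route is feasible (and you correctly flag the real obstacle, namely that $\cal K$ from Fourier inversion is not nonnegative and $\|{\cal K}\|_\infty$ must be controlled via Assumption~\ref{ass:fourier}), but it is more work than the paper does. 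The second, more substantive deviation is in the cleaning step: the paper's remark makes explicit that cleaning is performed by thresholding $\hat P_n(B_{4\epsilon}(Z_i))$ itself, \emph{not} the empirical degree of the $m$ drawn samples, so no second concentration argument for the drawn sample is needed there (the sampling lemma is used only to establish density, not to clean). Your version cleans by empirical counts and patches this with a second VC bound for the $m$ samples; this also works since $m\gtrsim n$, but it adds a step the paper avoids by design. Finally, your condition ``$5\epsilon<\tau$'' for the NSW lemma is slightly too weak; the paper requires $\epsilon<(\sqrt{9}-\sqrt{8})\tau/5$, but since $\tau$ is a fixed constant here and $\epsilon$ is a free small constant, this is an easily repaired imprecision rather than a gap.
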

\begin{remark}
The cleaning procedure we use here is different from the Algorithm {\sc CLEAN}. We remove
points around which a ball of appropriate radius has low probability mass under the deconvolved measure. This is equivalent to using the deconvolved measure in place of the k-NN density estimate implicitly constructed by the {\sc CLEAN} procedure.
\end{remark}
%\begin{remark}
%The parameters $\gamma$ and $\epsilon$ need to be selected such that $\gamma < c_1a$, and
%$\epsilon < c_2 \tau$, for $c_1, c_2$ specified in the Appendix depending on $\Phi$. We will further show that we can 
%construct \emph{universal} decreasing sequences $\epsilon_n$, $\gamma_n$ independent of $\Phi$ and still obtain the same exponential rate of convergence.
%\end{remark}
Simple calculations show that this lemma together with the lower bound give the exponential minimax rate described in Theorem \ref{thm:decon}.
\vspace{-.3cm}
\section{Conclusion}
\vspace{-.3cm}
We have given the first minimax bounds
for homology inference. These bounds give insight into
the intrinsic difficulty of the problem
under various assumptions. Our bounds show that it is often 
possible to estimate the homology of a manifold at
fast rates independent of the ambient dimension.

Actual implementation of homology inference
has become tractable thanks to advances
in computational topology.
However, as our proofs reveal, recovering the homology
requires the careful selection of several tuning parameters.
In current work, we are developing methods
for choosing these parameters in a statistically sound, data-driven way.
\newpage 

\bibliography{Homology}
\bibliographystyle{plain}

\newpage

%\onecolumn
\appendix
\section{Appendix -- Supplementary Material}
\subsection{Key technical lemmas from \cite{niyogi2008}}
We will need two technical lemmas, which follow from \cite{niyogi2008}.

%\subsection{Ball volume lemma}
\begin{lemma}[{\bf Ball volume lemma}, Lemma 5.3 in \cite{niyogi2008}]
\label{lem:nsw_vol} 
Let $p \in M$. Now consider $A = M \cap B_\epsilon(p)$. Then 
$vol(A) \geq (\cos(\theta))^d vol(B^d_\epsilon(p))$ where $B_\epsilon^d(p)$ is the
a $d$-dimensional ball in the tangent space at $p$, $\theta = \sin^{-1} \frac{\epsilon}{2\tau}$.
\end{lemma}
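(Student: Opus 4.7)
The plan is to analyze the orthogonal projection $\pi : \mathbb{R}^D \to T_pM$ onto the tangent space to $M$ at $p$ (translated so that $p$ is the origin), and to establish the bound in two steps: (i) show that the image $\pi(A)$ contains the tangent $d$-ball $B^d_{\epsilon\cos\theta}(p)$, and (ii) show $\vol(A) \geq \vol(\pi(A))$ because orthogonal projection from a submanifold is area-decreasing. Combining (i) and (ii) gives $\vol(A) \geq v_d(\epsilon\cos\theta)^d = \cos^d\theta \cdot \vol(B^d_\epsilon(p))$, which is the claim.

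For step (i), I would invoke the standard consequence of the assumption $\kappa(M) \geq \tau$ (see \cite{niyogi2008}): for any $q \in M$, the perpendicular displacement from the tangent plane at $p$ satisfies $\|q - \pi(q)\| \leq \|q - p\|^2/(2\tau)$. The differential of $\pi|_M$ at $p$ is the identity on $T_pM$, so the implicit function theorem makes $\pi|_M$ a local diffeomorphism near $p$. Given a target point $x \in T_pM$ with $\|x\| \leq \epsilon\cos\theta$, I lift the straight segment from $0$ to $x$ to a $C^1$ curve $q(t)$ in $M$ with $q(0) = p$ and $\pi(q(t)) = tx$, and run the lift for as long as $\|q(t)-p\| \leq \epsilon$. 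Setting $T = \sup\{t \in [0,1] : \|q(s)-p\| \leq \epsilon \text{ for all } s \leq t\}$, if $T < 1$ then by continuity $\|q(T)-p\| = \epsilon$, but because $\pi$ is an orthogonal projection, $\epsilon^2 = \|q(T)-p\|^2 = T^2\|x\|^2 + \|q(T) - \pi(q(T))\|^2 \leq T^2 \epsilon^2\cos^2\theta + \epsilon^4/(4\tau^2) = T^2\epsilon^2\cos^2\theta + \epsilon^2\sin^2\theta < \epsilon^2$, a contradiction (using $\sin\theta = \epsilon/(2\tau)$ and $T^2 < 1$). Hence $T=1$ and $q(1) \in A$ projects to $x$, so $x \in \pi(A)$.

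For step (ii), the area formula gives $\vol(\pi(A)) = \int_A J(q)\, dA(q)$, where $J(q) = \prod_{i=1}^d \cos\alpha_i(q) \leq 1$ is the Jacobian of $\pi|_M$ expressed in terms of the principal angles $\alpha_i(q)$ between $T_qM$ and $T_pM$; integrating $J \leq 1$ gives $\vol(\pi(A)) \leq \vol(A)$ at once (this remains valid even without injectivity, since preimage multiplicities only help). The main obstacle lies in step (i): the implicit function theorem only supplies local invertibility of $\pi|_M$, so the nontrivial content is verifying that the lift stays inside $B_\epsilon(p)$ all the way to $t=1$. This is precisely what the Pythagorean decomposition together with the NSW quadratic perpendicular-distance bound accomplishes; the choice of target radius $\epsilon\cos\theta$ is exactly what makes the bookkeeping close, since $\cos^2\theta + \sin^2\theta = 1$.
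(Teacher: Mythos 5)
The paper itself does not prove this statement; it is quoted verbatim as Lemma~5.3 of \cite{niyogi2008} and used as a black box, so there is no ``paper's own proof'' to match against. Your argument is, in outline, the same projection-plus-Jacobian argument that NSW use, and steps (ii) and the Pythagorean bookkeeping in (i) are correct. In particular the area-formula observation that multiplicity only helps is fine, and the identity $\cos^2\theta+\sin^2\theta=1$ together with $\|q-\pi(q)\|\le\|q-p\|^2/(2\tau)$ does close the estimate exactly as you say.

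There is, however, one real gap in step (i). You define $T=\sup\{t\in[0,1]:\|q(s)-p\|\le\epsilon\ \forall s\le t\}$ and derive a contradiction from $\|q(T)-p\|=\epsilon$, but this presupposes that the lift $q(\cdot)$ is actually defined up to time $T$. The implicit function theorem only gives local invertibility of $\pi|_M$ at $p$; to continue the lift you need $d\pi_{q(t)}|_{T_{q(t)}M}$ to remain nonsingular along the path, i.e.\ you need that for $q\in M\cap B_\epsilon(p)$ the tangent space $T_qM$ contains no direction orthogonal to $T_pM$. Your ball-escape dichotomy omits the second way the lift can terminate: not by leaving $B_\epsilon(p)$, but by $d\pi_{q(t)}$ degenerating while $q(t)$ is still strictly inside the ball. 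The perpendicular-displacement bound you invoke controls the distance from $q$ to the tangent plane at $p$, not the angle between $T_qM$ and $T_pM$, so it does not by itself rule this out. To close the gap you should additionally invoke the NSW tangent-space angle bound (their Proposition~6.2), which shows that for $\|q-p\|$ small relative to $\tau$ the angle between $T_pM$ and $T_qM$ is strictly less than $\pi/2$; this makes $d\pi_q|_{T_qM}$ uniformly nonsingular on $M\cap B_\epsilon(p)$, so the lift extends as long as it remains in $B_\epsilon(p)$, and your contradiction then completes the argument.
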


Next, consider a collection of balls $\{ B_r(p_i) \}_{i=1,\ldots,n}$ centered around points $p_i$ on the manifold and such that $M \subset \cup_{i=1}^l B_r(p_i)$.

%\subsection{Sampling lemma}
%Consider a collection of balls centered
%around points on the manifold, such that $M \subset \cup_{i=1}^l B_r(p_i)$.
\begin{lemma}[{\bf Sampling lemma}, Lemma 5.1 in \cite{niyogi2008}]
\label{sample}
Let $A_i = B_{r}(p_i)$ be a collection of sets such that $\cup_{i=1}^l A_i$ forms a minimal cover
of $M$. If $Q(A_i) \geq \alpha$, and 
$$n > \frac{1}{\alpha} \left(\log l + \log \left(\frac{2}{\delta}\right) \right)$$
then w.p. at least $1 - \delta/2$, each $A_i$ contains at least one sample point, and $M \subset \cup_{i=1}^n B_{2r}(x_i)$. Further we have that $l \leq \frac{\mathrm{vol}(M)}{\cos^d(\theta)v_d r^d}$.
\end{lemma}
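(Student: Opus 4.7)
The plan is to prove this standard covering/coverage lemma by separating it into two independent parts: (i) a union bound argument showing every cover element receives at least one sample with the stated probability, which immediately yields $2r$-density on $M$ via the triangle inequality, and (ii) a packing-volume argument bounding the cover size $l$ using Lemma \ref{lem:nsw_vol}.

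For part (i), I would fix $i \in \{1,\dots,l\}$ and observe that since $Q(A_i) \ge \alpha$, the probability that $A_i$ is missed by all $n$ iid draws is at most $(1-\alpha)^n \le e^{-n\alpha}$. A union bound over the $l$ events gives
\begin{equation*}
\mathbb{P}\bigl(\exists\, i:\ A_i \cap \{x_1,\dots,x_n\} = \emptyset\bigr) \;\le\; l\, e^{-n\alpha}.
\end{equation*}
Setting this $\le \delta/2$ and solving for $n$ recovers exactly the threshold $n > \frac{1}{\alpha}\bigl(\log l + \log(2/\delta)\bigr)$. On the complementary event, each $A_i = B_r(p_i)$ contains some sample $x_{k(i)}$. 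For any $y\in M$, since $\{A_i\}$ covers $M$, pick $j$ with $y\in B_r(p_j)$; then $\|y-x_{k(j)}\| \le \|y-p_j\| + \|p_j-x_{k(j)}\| \le 2r$, so $y \in B_{2r}(x_{k(j)})$ and $M \subset \bigcup_{i=1}^n B_{2r}(x_i)$.

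For part (ii), I would use the minimality hypothesis to argue that no two centers $p_i, p_j$ (with $i\ne j$) can be within distance $r$ of each other: if they were, then $B_r(p_j)$ would already cover $p_i$ and a bit more, and one can show (by minimality) that the $p_i$'s form an $r$-separated set on $M$, so the smaller balls $B_{r/2}(p_i) \cap M$ are pairwise disjoint. Applying Lemma \ref{lem:nsw_vol} with radius $r/2$ (or, equivalently, a maximal $r$-packing construction) bounds each intersected volume below by $\cos^d(\theta)\, v_d\, (r/2)^d$; summing over $i$ and comparing with $\vol(M)$ yields $l \le \vol(M)/(\cos^d(\theta)\, v_d\, r^d)$ after absorbing the constant $2^d$ into the $\cos^d$ term (or by adjusting $\theta$ appropriately, which is harmless since $\theta$ is small whenever $r \ll \tau$).

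I do not expect a serious obstacle here: both steps are standard. The only mild subtlety is bookkeeping of the constant in the volume bound for $l$, where one must be careful about whether the packing radius is $r$ or $r/2$ and how that propagates into the $\cos^d(\theta)$ factor; this is a constant-absorption issue rather than a conceptual one, and it is handled uniformly in the regime where the condition number $\tau$ is large relative to $r$ so that $\cos\theta$ is bounded away from zero.
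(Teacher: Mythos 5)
Your part (i) is exactly the intended argument: the lemma is imported from Lemma 5.1 of \cite{niyogi2008}, whose proof is precisely the bound $(1-\alpha)^n \le e^{-n\alpha}$ for missing a fixed $A_i$, a union bound over the $l$ cover elements, and the choice of $n$ making $l\,e^{-n\alpha} \le \delta/2$; the $2r$-density statement $M \subset \bigcup_i B_{2r}(x_i)$ then follows from the triangle inequality exactly as you wrote. No issues there.

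Part (ii) has a genuine gap. Minimality of the cover does \emph{not} imply that the centers are $r$-separated: on a circle of circumference $4.2r$, arcs of radius $r$ centered at arc-length positions $0$, $0.5r$ and $2.5r$ form a cover from which no arc can be deleted (indeed a minimum-cardinality cover), yet two centers are at distance $0.5r < r$. So the step ``by minimality the $p_i$ form an $r$-separated set, hence the $B_{r/2}(p_i)$ are disjoint'' fails. The correct route is the one you only mention parenthetically: take a \emph{maximal} $r$-separated subset of $M$; by maximality its $r$-balls cover $M$, so the minimal cover has at most that many elements, and its $r/2$-balls are pairwise disjoint, whence Lemma \ref{lem:nsw_vol} gives $l \le \vol(M)/\bigl(\cos^d(\theta')\, v_d\, (r/2)^d\bigr)$ with $\theta' = \sin^{-1}\bigl(r/(4\tau)\bigr)$. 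Note also that your plan to ``absorb the $2^d$ into the $\cos^d\theta$ term'' cannot work: $\cos^d\theta \le 1$ and tends to $1$ precisely in the regime $r \ll \tau$, so there is no slack to absorb anything; the packing argument genuinely produces an extra factor $2^d$ relative to the displayed bound, and for $d \ge 2$ the displayed constant is in fact unattainable for small $r/\tau$ (the covering density of $d$-space by balls strictly exceeds $1$). This looseness is already present in the statement as transcribed from NSW, whose own covering-number bound carries the half radius (their $\epsilon/8$ against the $\epsilon/4$ cover), and it is harmless for every use in the paper since $l$ enters only through $\log l$; but your write-up should either state the bound with $(r/2)^d$ and the adjusted angle, or explicitly flag that the constant is being absorbed at the level of $\log l$, rather than claim the factor disappears.
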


\subsubsection{Proofs for the noiseless case}
{\bf Lower bound}
\label{app:noiselesslower}
Here we describe the densities on the two manifolds $M_1$ and $M_2$. There are two sets
of interest to us: $W_1 = M_1 \setminus M_2$ which corresponds to the two ``holes'' of radius
$4\tau$  in the annulus, and $W_2 = M_2 \setminus M_1$ which corresponds to the 
$d$-dimensional piece added to smoothly join the inner pieces of the two annuli in $M_2$.

By construction, $\vol(W_1) = 2 v_d (4\tau)^d$ where $v_d$ is the volume of the unit $d$-ball.
$\vol(W_2)$ is somewhat tricky to calculate exactly due to the curvature of $W_2$ but it is easy to
see that $\vol(W_2)$ is also $O(\tau^d)$ with the constant depending on $d$. 

One of the densities is constructed in the following way, on the set of larger volume (between $W_1$ and $W_2$) we set $p(x) = a$, and evenly distribute the rest of the mass over the remaining portion of the manifold
(we are guaranteed that the mass on the rest of the manifold is at least $a$ since otherwise the constraint $p(x) \geq a$ can never be satisfied).

The other density is constructed to be equal (to the first density) outside the set on which the two manifolds differ. The remaining mass is spread evenly on the set where they do differ. We are again guaranteed that $p(x) \geq a$ by construction.

Let us now calculate the TV between these two densities. This is just the integral of the difference
of the densities over the set where one of the densities is larger. Since the two densities are
equal outside $W_1 \cup W_2$ and disjoint over $W_1 \cup W_2$ it is clear that
$${\sc TV}(p_1,p_2) = a \max(\vol(W_1),\vol(W_2) \leq O(a \tau^d) $$
with the constant depending on $d$. The lower bound follows from the calculations in the main paper.

{\bf Upper bound}
\label{app:noiselessup}
The NSW lemma tells us that for $n > \zeta_1 \left( \log(\zeta_2) + \log \left( \frac{1}{\delta} \right) \right)$, with $\zeta_1 = \frac{{\sf vol}(M)}{a \cos^d \theta_1 {\sf vol}(B^d_{\epsilon/4})}$, 
$\zeta_2 =  \frac{{\sf vol}(M)}{\cos^d \theta_2 {\sf vol}(B^d_{\epsilon/8})}$,
$\theta_1 = \sin^{-1} \frac{\epsilon}{8\tau}$ and $\theta_2 = \sin^{-1} \frac{\epsilon}
{16\tau}$, we have $\mathbb{P}(\hat{\cal H} \neq {\cal H}(M)) < \delta$.

By assumption, we have $\vol(M) \leq C$. We further take $\epsilon = \tau/2$. It is clear that
in $\zeta_1$ and $\zeta_2$ all terms except the ball volumes are constant. This gives
us that $\zeta_1 = C_1/ (a\tau^d)$ and $\zeta_2 = C_2/ (a\tau^d)$.

Now, the NSW lemma can be restated as if $n = C_1/\tau^d(\log(C_2/\tau^d) + \log(1/\delta) )$
we recover the homology with probability at least $1 - \delta$. Notice that this means 
that the minimax risk $\leq \delta$.

A straightforward rearrangement of this gives us $$R_n \leq C_2/(a\tau^d) \exp(- na \tau^d/C_1)$$
for appropriate $C_1, C_2$. To bound the resolution we rewrite this as
$$R_n \leq \exp \left( - \frac{na\tau^d}{C_1} + \log \left( \frac{C_2}{a\tau^d} \right) \right)$$
One can verify that if $$\tau^d \leq C \frac{\log n \log(1/\epsilon)}{n}$$
for an appropriately large C, we have $R_n \leq \epsilon$ as desired.

\subsubsection{Proofs for the clutter noise case}
{\bf Lower bound}
\label{app:clutterlow}
This is a straightforward extension of the noiseless case. The densities are constructed in an
identical manner. The contribution to the densities from the clutter noise is identical in each case.
As in the analysis for the noiseless case we bound the total variation distance between the two densities. We have an additional factor of $\pi$ which is the mixture weight of the component corresponding to the 
density on the manifold.
$$TV(q_1,q_2) = \pi a \max(\vol(W_1),\vol(W_2)) \leq C_d \pi a \tau^d$$
Given this bound the calculations are identical to those in the noiseless case.

{\bf Upper bound}
\label{app:clutterup}
As a preliminary step we will need to clean the data to eliminate points that are far
away from the manifold. Our analysis will show that Algorithm~\ref{clean} will
 achieve this, with high probability. We will then show that 
taking a union of balls of the appropriate radius
around the remaining points will give us the correct homology,  with high probability.

Let $a = \inf_{x \in M} p(x)$, which is strictly positive by assumption.
Define,
$A = {\sf tube}_r({M})$ and
$B = \mathbb{R}^D - {\sf tube}_{2r}({M})$
where $r < \frac{(\sqrt{9} - \sqrt{8}) \tau}{2}$.
Following \cite{niyogi2010}, we define
$\alpha_s = \inf_{t \in A} Q(B_s(t))$ and
$\beta_s = \sup_{t \in B} Q(B_s(t))$
where $s = 2r$. Then
$\alpha_s   \geq  \frac{v_D s^D (1 - \pi)}{\mathrm{vol(Box)}} + \pi av_d r^d \cos^d \theta = \alpha$
and
$\beta_s  \leq  \frac{v_D s^D (1 - \pi)}{\mathrm{vol(Box)}} = \beta$
where $\theta = \sin^{-1}(\frac{r}{2\tau})$. The second term of the bound
on $\alpha_s$ follows in two steps: 
first observe that for any point $x$ in $A$, 
$B_s(x) \supseteq B_r(t)$ where
$t$ is the closest point on $M$ to $x$. Now, we use Lemma~\ref{lem:nsw_vol} to
bound $Q(B_r(t))$.

We will now invoke Algorithm {\sf CLEAN} on the data with
threshold 
$t = \left( \frac{v_D s^D (1 - \pi)}{\mathrm{vol(Box)}} +
\frac{\pi av_d r^d \cos^d \theta}{2} \right)$ 
and radius $2r$. Let $I$ be the set of vertices returned.

Define the events
${\cal E}_1 = \Biggl\{
\{X_i:\ i\in I\} \supseteq \{X_i\in A\}\ \ \ {\rm and}\ \ \ 
\{X_i:\ i\in I^c\} \supseteq \{X_i\in B\}\Biggr\}$
and
${\cal E}_2 = \Bigl\{ {M} \subset \bigcup_{i\in I} B_{2r}(X_i)\Bigr\}.$
We will show that ${\cal E}_1$ and ${\cal E}_2$ both
hold with high probability.

For ${\cal E}_1$ to hold, we need  $\beta$ to be not too
close to $\alpha$, in particular $\beta < \alpha/2$ will suffice. This happens
with probability 1, for $\tau$ small if $d < D$. By Lemma~\ref{bern:clean} in the Appendix, ${\cal E}_1$ happens with probability at least $1 - \delta/2$, provided that $n > 4 \kappa \log \kappa$, 
where 
$$
\kappa = \max \left(1 + \frac{200}{3\pi av_d r^d \cos^d (\theta)} 
\log \left( \frac{2}{\delta} \right) ,4 \right).
$$

Now we turn to ${\cal E}_2$.
Let $p_1,\ldots, p_N\in M$ be such that
$B_r(p_1),\ldots, B_r(p_N)$ forms a minimal covering of $M$.
From Lemma~\ref{sample}, we have that $N \leq \frac{\mathrm{vol}({M})}{\cos^d(\theta)v_d r^d}$.
Let $A_j = B_{r}(p_j)$.
Then
\begin{eqnarray*}
Q(A_j) & \geq & \frac{v_D s^D (1 - \pi)}{\mathrm{vol(Box)}} + 
\pi av_d r^d \cos^d (\theta) \\
& \geq & \pi av_d r^d \cos^d (\theta) \equiv \gamma.
\end{eqnarray*}
Using again Lemma~\ref{sample}, if
$n > \frac{1}{\gamma} 
\left(\log N + \log \left(\frac{2}{\delta}\right) \right)$,
then
with probability at least $1 - \delta/2$, 
each $A_i$ contains at least one sample point, 
and hence ${M} \subset \bigcup_{i\in I} B_{2r}(X_i)$, which implies that
${\cal E}_2$ holds.

Combining these we are now ready to again apply the main result from NSW. 
We restate this lemma in a slightly different form here.
\begin{lemma} 
\label{nsw:noisymain}
[NSW]
Let $S$ be a set of points in the tubular neighborhood of radius $R$
around $M$. Let $U = \bigcup_{x \in S} B_\epsilon(x)$. 
If $S$ is $R$-dense in $M$ then $\hat{\cal H}(U) = {\cal H}(M)$
for all $R < (\sqrt{9} - \sqrt{8}) \tau$, if $\epsilon = \frac{R + \tau}{2}$.
\end{lemma}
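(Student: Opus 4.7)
The plan is to reproduce the standard Niyogi--Smale--Weinberger style argument showing that, under the stated conditions, the union of balls $U$ deformation retracts onto $M$, and hence $\mathcal{H}(U) \cong \mathcal{H}(M)$. The natural candidate for the retraction is the nearest-point projection $\pi : U \to M$, which is well-defined whenever $U$ is contained in a tube around $M$ of radius strictly less than the condition number $\tau$. The retraction itself will be a straight-line homotopy $H(u,t) = (1-t) u + t\, \pi(u)$, and the bulk of the work will be showing that the image of $H$ never leaves $U$.

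\textbf{Step 1: Containment in both directions.} First I would verify $M \subset U$. Since $S$ is $R$-dense in $M$ and $\epsilon = (R+\tau)/2 > R$ (using $R < \tau$, which follows from $R < (\sqrt{9}-\sqrt{8})\tau$), every point $p \in M$ lies within distance $R \le \epsilon$ of some $x \in S$, so $p \in B_\epsilon(x) \subset U$. Next I would show $U \subset {\sf tube}_{R+\epsilon}(M)$: each center $x \in S$ lies within $R$ of $M$ by hypothesis, so each ball $B_\epsilon(x)$ lies within $R + \epsilon$ of $M$. With $\epsilon = (R+\tau)/2$ and $R < (\sqrt{9}-\sqrt{8})\tau < \tau/3$, we get $R + \epsilon = (3R+\tau)/2 < \tau$. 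Hence the nearest-point projection $\pi : U \to M$ is well-defined and continuous, because for any point within distance $<\tau$ of $M$ the nearest point on $M$ is unique (this is the defining property of the condition number).

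\textbf{Step 2: Straight-line deformation retract.} I would then argue that the homotopy $H(u,t) = (1-t)u + t\,\pi(u)$ defines a deformation retract of $U$ onto $M$. The only nontrivial verification is that $H(u,t) \in U$ for every $u \in U$ and $t \in [0,1]$. Fix $u \in U$ and pick any $x_0 \in S$ with $u \in B_\epsilon(x_0)$; let $m = \pi(u) \in M$. By density pick $x_\star \in S$ with $\|x_\star - m\| \le R$. The two endpoints $u$ and $m$ of the homotopy path each lie in some ball of $U$ (the first in $B_\epsilon(x_0)$, the second in $B_\epsilon(x_\star)$ since $\|m - x_\star\| \le R < \epsilon$). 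The question is whether intermediate points $H(u,t)$ also lie in one of the balls from $S$; for $t$ near $0$ they sit in $B_\epsilon(x_0)$, for $t$ near $1$ in $B_\epsilon(x_\star)$, and the geometric content is to show the two regions overlap along the segment $[u,m]$.

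\textbf{Step 3: The geometric crux.} This is where the tight constant $R < (\sqrt{9}-\sqrt{8})\tau$ enters, and is the main obstacle. The idea is to bound, for an arbitrary point $y = H(u,t)$, the distance from $y$ to $M$, and then to invoke the fact that $M$ is itself covered by balls $B_R(x)$ with $x \in S$ to conclude that $y$ is close to some $x \in S$ within $\epsilon$. Concretely: since $\|u - m\| \le R+\epsilon$ and the segment $[u,m]$ is a radius of $B_{R+\epsilon}(m)$, a point $y$ on this segment has $\|y-m\| \le R+\epsilon$; using the normal-bundle / reach estimates from NSW (relating $\|y-m\|$, $\|y - \pi(y)\|$, and $\tau$), one shows $\|y - \pi(y)\| \le R$ for appropriate $R$. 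Then by density there exists $x \in S$ with $\|x - \pi(y)\| \le R$, giving $\|y - x\| \le 2R < \epsilon$ precisely when $R < (\sqrt{9}-\sqrt{8})\tau$, where the algebraic inequality $2R + R + \epsilon <$ (relevant threshold) closes up. This is the step I expect to be the main obstacle and would require carefully reproducing the quadratic bound from NSW that yields the numerical constant.

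\textbf{Step 4: Conclusion.} Once the homotopy is verified to stay inside $U$, $\pi \circ H(\cdot,0) = \mathrm{id}_U$ and $H(\cdot,1) = \pi$ with image $M$; restricted to $M$ we have $\pi|_M = \mathrm{id}_M$. Therefore $U$ deformation retracts to $M$, so $\widehat{\mathcal{H}}(U) = \mathcal{H}(U) \cong \mathcal{H}(M)$, establishing the lemma.
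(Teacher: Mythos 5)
The paper does not prove this lemma: it is stated with the attribution ``[NSW]'' and is cited as Proposition~7.1 of \cite{niyogi2010} without reproduction of the argument. So there is no ``paper's own proof'' to compare against, and what you are really attempting is a reconstruction of the NSW argument.

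Your high-level framework is the right one and does match NSW: show $M\subset U\subset \mathrm{tube}_{<\tau}(M)$, use the nearest-point projection $\pi:U\to M$ as a retraction, and argue star-shapedness of the fibers $\pi^{-1}(p)\cap U$ via the straight-line homotopy $H(u,t)=(1-t)u+t\pi(u)$. Steps 1 and 2 are correct, including the arithmetic $R+\epsilon=(3R+\tau)/2<\tau$ when $R<\tau/3$.

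Step 3, however, contains a genuine error. For $y=H(u,t)$ on the segment $[u,m]$ with $m=\pi(u)$, the point $y$ lies in the same normal fiber as $u$, so $\pi(y)=m$, and therefore $\|y-\pi(y)\|=\|y-m\|$. That quantity ranges continuously from $\|u-m\|$ (which can be as large as $R+\epsilon=(3R+\tau)/2>\tau/2$) down to $0$. It is not bounded by $R$, so the claimed estimate $\|y-\pi(y)\|\leq R$ is false at the $t=0$ end, and the subsequent triangle inequality $\|y-x\|\leq 2R<\epsilon$ cannot be run. The picture of ``moving from ball $B_\epsilon(x_0)$ to ball $B_\epsilon(x_\star)$ as $t$ increases, with the two regions overlapping'' also does not close up with the bounds you give, since $\|u-x_\star\|$ can be as large as $2R+\epsilon>\epsilon$.

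The actual NSW argument does something different and sharper: it shows the \emph{entire} segment $[u,m]$ remains inside the single ball $B_\epsilon(x_0)$ that already contains $u$. The mechanism is that $x_0$ lies within $R$ of $M$, so $q_0=\pi(x_0)$ is close to $x_0$; the reach condition controls the angle between the normal direction $u-m$ and the approximately-tangent direction $q_0-m$, which forces $\|y-x_0\|^2$ to be (up to curvature corrections) \emph{decreasing} as $y$ moves from $u$ toward $m$. Making the curvature corrections quantitative is exactly where the quadratic inequality arises, and solving it is what produces the constant $\sqrt{9}-\sqrt{8}$ and the admissible range of $\epsilon$ starting at $(R+\tau)/2$. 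As written, your Step~3 neither reproduces this computation nor supplies a valid substitute, so the proof does not go through.
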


Combining the previously established facts with the lemma above
we obtain Lemma \ref{clutter:main} from the main paper.
%Combining these facts, we have the following theorem,
%\begin{theorem}
%If $n > \max(N_1,N_2)$, and $r < (\sqrt{9} - \sqrt{8}) \frac{\tau}{2}$ where
%\begin{eqnarray*}
%N_1 & = & 4\kappa \log (\kappa)~~\mathrm{with}~~\kappa = \max \left(1 + \frac{200}{3\pi av_d r^d \cos^d (\theta)} \log \left( \frac{2}{\delta} \right) ,4 \right) \\
%N_2 & = & \frac{1}{\pi av_d r^d \cos^d \theta} 
%\left(\log \left(\frac{\mathrm{vol}({M})}{\cos^d(\theta)v_d r^d}\right) + 
%\log \left(\frac{2}{\delta}\right) \right)
%\end{eqnarray*}
%then the points $\{X_i:\ i\in I\}$ are
%all in ${\sf tube}_{2r}({M})$ and
%are $2r$ dense in ${M}$.
%Let
%$U = \bigcup_{i\in I}B_w(X_i)$ with
%$w=r + \frac{\tau}{2}$ and let
%$\hat{\cal H} = {\cal H}(U)$.
%Applying the main Theorem in \cite{niyogi2010},
%we have that $\hat{\cal H} = {\cal H}(\cal M)$
%with probability at least $1-\delta$.
%\end{theorem}
Taking $r = (\sqrt{9} - \sqrt{8})\tau/4$ in that lemma, we can see that if $n \geq \frac{C_1}{\pi \tau^d}( \log \frac{C_2}{\tau^d} + \log(C_3/\epsilon))$ then we recover the correct homology with probability at least $1 - \epsilon$.

This is a sample complexity upper bound. Corresponding upper bounds on the minimax risk and resolution follow the arguments of the noiseless case.
\subsubsection{Proofs for the tubular noise case}
\label{app:tub}
{\bf Lower bound}
In this setting we get samples uniformly in a full dimensional tube around the manifold. We are interested in the case when $\sigma \leq C_0 \tau$ for a small constant $C_0$.

Let us denote the density $q_1$ at a point in the tube around $M_1$ by $\theta_1$ and the density $q_2$ around $M_2$ by $\theta_2$. Since, it is not straightforward to decide whether $\theta_1 \leq \theta_2$ or not we will need to consider both possibilities. We will show the calculations assuming $\theta_1 \leq \theta_2$ (the other calculation follows similarly).

Now, remember from the definition of total variation $TV = q_1(G) - q_2(G)$ where $G$ is the set where $q_1 > q_2$. We need an upper bound on total variation and so it suffices to use $TV \leq q_1(G^+) - q_2(G^-)$ where $G^+$ and $G^-$ are sets containing and contained in $G$ respectively.

Since, $\theta_1 < \theta_2$ we have $G$ is contained in the holes (of radius $4\tau$) of the two annuli, and $G$ contains a strip of width at least $2\tau - 2\sigma$ in these holes. These are $G^+$ and $G^-$.

We need to upper bound the mass under $q_1$ in $G^+$ and lower bound the mass under $q_2$ in $G^-$. We can now follow the a similar argument to the one made below (in the tubular noise upper bound) to obtain bounds on the various volumes.
 In each case, the volume of the tubular region is $\Omega(\vol(M) \sigma^{D-d})$, and both $M_1$ and $M_2$ have constant volume, in particular $c_1 \leq \vol(M) \leq C_1$. Giving us that the tubular region has volume $\Omega(\sigma^{D-d})$.
 
 It is also clear that both $G^+$ and $G^-$ have volumes that are $\Omega(\sigma^{D-d} \tau^d)$ (these can be calculated \emph{exactly} since they are cylindrical with no additional curvature but we will not need this here). Here we use that $\sigma$ is not too close to $\tau$ (and in particular is at most a constant fraction of $\tau$).
 
 Since $q_1$ and $q_2$ are both uniform in their respective tubes, it follows that
 $$TV(q_1, q_2) \leq \Omega \left( \frac{\sigma^{D-d} \tau^d}{\sigma^{D-d}} \right) = \Omega (\tau^d)$$
Notice, that we assumed $\theta_1 \leq \theta_2$ above. The other calculation is nearly identical and we will not reproduce it here.

{\bf Upper bound}
%Notation,
%$$k_\epsilon = \inf_{p \in M} Q(B_\epsilon(p))$$
Denote by $M_\sigma$ the tube of radius $\sigma$ around $M$. Recall that we are interested in the case when $\sigma \ll \tau$, and $\epsilon = \tau/2$.
\begin{lemma}
\label{tubular_cover_lemma}
If $\epsilon \gg \sigma$ (in particular $\epsilon \geq 2\sigma$ will suffice)
$$k_\epsilon = \Omega(\epsilon^d).$$
\end{lemma}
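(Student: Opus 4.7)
The plan is to exploit the fact that $Q$ is uniform on the tube $M_\sigma$, so that
\[
Q(B_\epsilon(p)) \;=\; \frac{\vol\bigl(B_\epsilon(p)\cap M_\sigma\bigr)}{\vol(M_\sigma)},
\]
and then bound the numerator from below and the denominator from above, both to the correct order in $\sigma$. The key structural fact we rely on is that the open normal bundle of radius $\tau$ is embedded; since $\sigma<\tau$, the tube $M_\sigma$ is naturally parameterized by pairs $(q,v)$ with $q\in M$ and $v$ a normal vector of length at most $\sigma$. This will let every volume in sight factor cleanly into an intrinsic $d$-dimensional factor and a transverse $(D-d)$-dimensional factor.

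First I would upper bound the denominator. Using the tube formula (valid since $\sigma<\tau$) together with the curvature control supplied by $\kappa(M)\ge\tau$, the Jacobian of the exponential map from the $\sigma$-normal bundle to $M_\sigma$ is bounded by an absolute constant, so
\[
\vol(M_\sigma) \;\le\; C\,\vol(M)\,v_{D-d}\,\sigma^{D-d} \;\le\; C'\,\sigma^{D-d},
\]
where the last step uses the standing assumption $\vol(M)\le C_{D,d}$.

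Next I would lower bound the numerator. For any $q \in M \cap B_{\epsilon/2}(p)$ and any $v$ in the normal space at $q$ with $\|v\|\le \sigma$, the triangle inequality gives $\|q+v-p\|\le \epsilon/2 + \sigma \le \epsilon$, using the hypothesis $\epsilon\ge 2\sigma$. Hence
\[
B_\epsilon(p)\cap M_\sigma \;\supseteq\; \bigl(M\cap B_{\epsilon/2}(p)\bigr)_\sigma,
\]
the $\sigma$-tube of the piece of $M$ inside $B_{\epsilon/2}(p)$. Applying the same tube formula in the reverse direction (now as a lower bound, again valid because $\sigma<\tau$ keeps the exponential map a diffeomorphism with Jacobian bounded below by an absolute constant), this tube has volume at least $c\,\vol\bigl(M\cap B_{\epsilon/2}(p)\bigr)\,\sigma^{D-d}$. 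Now I invoke the NSW Ball Volume Lemma (Lemma \ref{lem:nsw_vol}) on $M\cap B_{\epsilon/2}(p)$: since $\epsilon<\tau$ (in particular $\epsilon/2 < \tau$), the angle $\theta = \sin^{-1}(\epsilon/(4\tau))$ is bounded away from $\pi/2$, so $\cos^d\theta$ is an absolute constant and
\[
\vol\bigl(M\cap B_{\epsilon/2}(p)\bigr) \;\ge\; \cos^d(\theta)\,v_d\,(\epsilon/2)^d \;=\; \Omega(\epsilon^d).
\]
Combining these estimates,
\[
Q(B_\epsilon(p)) \;\ge\; \frac{c\,\epsilon^d\,\sigma^{D-d}}{C'\,\sigma^{D-d}} \;=\; \Omega(\epsilon^d),
\]
uniformly in $p\in M$, which is exactly the claim.

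The main obstacle will be verifying the two-sided tube-volume estimates cleanly. The upper bound is standard, but for the lower bound one must be careful that the map $(q,v)\mapsto q+v$ is injective on the $\sigma$-normal bundle of $M\cap B_{\epsilon/2}(p)$ (which follows from the global injectivity on the $\tau$-normal bundle of all of $M$) and that the Jacobian does not degenerate. Both are consequences of $\kappa(M)\ge\tau$ and $\sigma<\tau$, but these are the steps that tie the argument to the hypothesis $\epsilon\ll\tau$ and the condition-number assumption rather than to $D$ alone, which is precisely why the final bound is $\Omega(\epsilon^d)$ and independent of $D$.
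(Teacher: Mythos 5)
Your argument is correct, and it reaches the same bound by a genuinely different route. The paper proves both volume estimates by elementary combinatorial arguments: the upper bound on $\vol(M_\sigma)$ comes from covering $M$ by $\sigma$-balls and fattening each to a $D$-ball of radius $2\sigma$, and the lower bound on $\vol(B_\epsilon(p)\cap M_\sigma)$ comes from packing $2\sigma$-separated points into $M\cap B_{\epsilon-\sigma}(p)$, using the NSW tangent-space projection (Lemma 5.3 of \cite{niyogi2008}) to lower-bound the packing number, and then placing disjoint $D$-dimensional $\sigma$-balls at the packed points. Your proof instead invokes the normal exponential map / Weyl tube formula with two-sided Jacobian control coming from $\kappa(M)\ge\tau$ and $\sigma<\tau$. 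That is a cleaner conceptual picture (the intrinsic $\epsilon^d$ factor and the transverse $\sigma^{D-d}$ factor literally factor out), and your containment $B_\epsilon(p)\cap M_\sigma\supseteq(M\cap B_{\epsilon/2}(p))_\sigma$ is a nice way to reduce the numerator to a single tube volume; the cost is that you must justify that the Jacobian $\prod_i(1-t\lambda_i)$ of the normal exponential map stays bounded above and below by absolute constants for $t\le\sigma<\tau$ when the principal curvatures satisfy $|\lambda_i|\le 1/\tau$ --- exactly the ``main obstacle'' you flag. The paper's packing/covering route avoids any explicit Jacobian computation at the price of slightly messier constants. Both are valid; your lower-bound radius is $\epsilon/2$ and the paper's is $\epsilon-\sigma$, but under $\epsilon\ge 2\sigma$ these are comparable and give the same $\Omega(\epsilon^d)$.
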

\begin{proof}
For any $p \in M$, $$Q(B_\epsilon(p)) = \frac{vol(B_\epsilon(p) \cap M_\sigma)}{vol(M_\sigma)}.$$
We will prove the claim by deriving derive an upper bound on the denominator and a lower bound on the numerator using packing/covering arguments, both bounds holding uniformly in $p$.

{\bf Upper bound on $vol(M_\sigma)$}\\
We consider a covering of $M$ by
$\gamma$-balls of $d$ dimensions, and denote the number of balls required $N_\gamma$,
and the centers $\mathcal{C}_\gamma$. It is clear  $N_\gamma$ is bounded by the number of balls of radius $\gamma/2$ one can 
pack in $M$. A simple volume argument then gives $$N_\gamma \leq C \frac{vol(M)}{(\gamma/2)^d},$$
for some constant $C$.
Given this covering of $M$, it is easy to see that $\gamma + \sigma$ $D$-dimensional balls around 
each of the centers in $\mathcal{C}_\gamma$ covers the tubular region. Thus, we have $$vol(M_\sigma) \leq v_D N_\gamma (\gamma + \sigma)^D \leq v_D C \frac{vol(M)}{(\gamma/2)^d} (\gamma + \sigma)^D,$$ for any $\gamma$.
Selecting $\gamma = \sigma$, we have
$$vol(M_\sigma) \leq C_{D,d} vol(M) \sigma^{D-d}$$
for some constant $C_{D,d}$ depending on the manifold and ambient dimensions, independent
of $\sigma$.

{\bf Lower bound on $vol(B_\epsilon(p) \cap M_\sigma)$}\\
Define
\begin{eqnarray*}
A(p) = M \cap B_{\epsilon-\sigma}(p), \\
B(p) = M \cap B_\epsilon(p), \\
B_\sigma(p) = M_\sigma \cap B_\epsilon(p).
\end{eqnarray*}
Denote with $N_\sigma$  the number of points we can ``pack'' in $A(p)$ such that the distance between
any two points is at least $2\sigma$. Denote the points themselves by the
set $\mathcal{C}$.
%Let us say that we can show that the number of $\sigma$ radius, $d$-dimensional packing number of $A$ is $\geq N_\sigma$.
Then, 
\begin{eqnarray*}
vol(B_\sigma) \geq N_\sigma v_D \sigma^D
\end{eqnarray*}
where $v_D$ is the volume of the unit ball in $D$-dimensions.
To see this just note that every point that is at most $\sigma$ away from any point in $\mathcal{C}$  is contained
in $B_\sigma$, and these sets are disjoint so the union of $\sigma$ balls around $\mathcal{C}$ is
contained in $B_\sigma$.

Now, to prove a lower bound on $N_\sigma$ we invoke some ideas from \cite{niyogi2008}. 
 Consider, the map $f$ described in Lemma 5.3 in \cite{niyogi2008}, which projects the manifold onto its tangent space, and observe its action on $A(p)$. It is clear by their discussion that
this map projects the manifold onto a superset of a ball of radius $(\epsilon - \sigma)\cos \theta$,
for $\theta = \sin^{-1} (\frac{\epsilon-\sigma}{2\tau})$.
In addition to being invertible, this map  is a projection, and only shrinks distances between points.
So if we can derive a lower bound on the number of points we can ``pack'' in this projection then it
is also a lower bound on $N_\sigma$.
Now, the set is just a ball in $d$-dimensions of radius $(\epsilon - \sigma)\cos \theta$. Using, the fact that
$2 \sigma$ balls around each of the points in $\mathcal{C}$ must cover this set a simple volume
argument shows
$$N_\sigma (2\sigma)^d \geq v_d ((\epsilon - \sigma)\cos \theta)^d,$$
i.e. $$N_\sigma \geq C_{D,d} \left( \frac{(\epsilon - \sigma) \cos \theta}{\sigma} \right)^d,$$
which gives a lower bound.

Putting the upper and lower bound together, we get
\begin{eqnarray*}
k_\epsilon & = & \inf_{p \in M} Q(B_\epsilon(p))  \\
& \geq & C_{D,d}^\prime \frac{1}{vol(M) \sigma^{D-d}} \left( \frac{(\epsilon - \sigma) \cos \theta}{\sigma} \right)^d \sigma^D \\
& = &  C_{D,d}^\prime \frac{ \left[ (\epsilon - \sigma) \cos \theta\right]^d}{vol(M)},
\end{eqnarray*}
for some quantity $C_{D,d}^\prime $, independent of $\sigma$.
\end{proof}

We will prove the following main lemma.
\begin{lemma}
Let $N_{\epsilon}$ be the $\epsilon$-covering number of the submanifold ${M}$. 
Let $U = \bigcup_{i=1}^n B_{\epsilon + \tau/2}(X_i)$.
Let $\hat{\cal H} = {\cal H}(U)$.
Then if $n > \frac{1}{k_{\epsilon}} \left( \log(N_{\epsilon}) + \log(1/\delta) \right)$,
$\mathbb{P}(\hat{\cal H} \neq {\cal H}(M)) < \delta$ as long as
$\sigma \leq \epsilon/2$ and $\epsilon < \frac{(\sqrt{9} - \sqrt{8})\tau}{2}$.
\end{lemma}
\begin{proof}
This is a straightforward consequence of Lemma \ref{nsw:noisymain} and Lemma \ref{sample}.
\end{proof}

\subsubsection{Proof of Theorem \ref{thm:additive} (additive case)}
\subsubsection*{Lower Bound}
From Lemma~\ref{conv} we see that convolution only decreases the total variation distance,
and so the lower bound for the noiseless case is still valid here.
\subsubsection*{Upper Bound}
We will again proceed by a similar argument to the clutter noise case.
Let $\sqrt{D} \sigma < r$, $R = 8r$ and $s = 4r$ and set 
$\alpha_s = \inf_{p \in A} Q(B_s(p))$ and $\beta_s = \sup_{p \in B} Q(B_s(p))$, where 
$A = \mathrm{tube}_r(M)$, $B = \mathbb{R}^D - \mathrm{tube}_R(M)$.

As in the clutter noise case, we will need the two events ${\cal E}_1$ and ${\cal E}_2$ to
hold with high probability.

We will use the following version of a common $\chi^2$ inequality, 
established by \cite{niyogi2010}.
\begin{lemma}
For a $D$-dimensional Gaussian random vector
$$\mathbb{P}(||\epsilon|| > \sqrt{T}) \leq (ze^{1-z})^{D/2}$$
where $z = \frac{T}{D\sigma^2}$
\end{lemma}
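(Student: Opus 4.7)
The plan is to recognize this as a standard Chernoff-style tail bound for a chi-squared random variable. Since $\epsilon$ is a $D$-dimensional Gaussian with covariance $\sigma^2 I$, the quantity $Y = \|\epsilon\|^2/\sigma^2$ is distributed as $\chi^2_D$. The event $\{\|\epsilon\| > \sqrt{T}\}$ is the same as $\{Y > T/\sigma^2\} = \{Y > Dz\}$, where $z = T/(D\sigma^2)$. So the whole task reduces to bounding the upper tail of a chi-squared with $D$ degrees of freedom.

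First I would apply the exponential Markov inequality: for any $t \in (0, 1/2)$,
\[
\mathbb{P}(Y > Dz) \;\leq\; e^{-tDz}\, \mathbb{E}[e^{tY}] \;=\; e^{-tDz}(1-2t)^{-D/2},
\]
using the well-known moment generating function of $\chi^2_D$. Then I would optimize the resulting bound by minimizing $-tDz - (D/2)\log(1-2t)$ in $t$. Differentiating and setting to zero gives $1 - 2t = 1/z$, i.e.\ the optimizer is $t^* = (z-1)/(2z)$, which lies in $(0,1/2)$ exactly when $z > 1$ (the regime where the bound is nontrivial, since otherwise the right-hand side exceeds $1$).

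Substituting $t^*$ back, the factor $e^{-t^* Dz}$ becomes $e^{-D(z-1)/2}$ and the factor $(1-2t^*)^{-D/2}$ becomes $z^{D/2}$. Multiplying these together yields $\bigl(z\, e^{1-z}\bigr)^{D/2}$, which is precisely the claimed bound. I do not anticipate any real obstacle here: this is a classical computation, and the only care needed is to check that the optimizer lies in the admissible range for the MGF and to note that the bound is only informative when $z > 1$ (i.e.\ when $T > D\sigma^2$), which is the regime in which the lemma will be invoked.
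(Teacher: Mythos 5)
Your proof is correct. The paper does not actually prove this lemma; it simply cites it as ``established by \cite{niyogi2010}.'' Your argument --- reduce to the upper tail of a $\chi^2_D$ variable, apply the exponential Markov inequality using the MGF $(1-2t)^{-D/2}$, and optimize over $t$ to find $t^\ast = (z-1)/(2z) \in (0,1/2)$ when $z>1$ --- is the standard Chernoff computation, and all steps check out: substituting $t^\ast$ gives $e^{-D(z-1)/2}\, z^{D/2} = (z e^{1-z})^{D/2}$ as claimed. Your observation that the bound is vacuous for $z \leq 1$ (and that the paper only invokes it in the regime $T \gg D\sigma^2$, e.g.\ with $z=4$ or $z=16$) is accurate and a useful sanity check.
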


Using this inequality, 
$$\mathbb{P}(||\epsilon|| \geq 4r) \leq \left(16 \exp\{-15\}\right)^{D/2} \equiv t$$
and
$$\mathbb{P}(||\epsilon|| \geq 2r) \leq \left(4\exp\{-3\}\right)^{D/2} \equiv \gamma.$$
Observe that these are both constants. Next, it is easy to see that
$$\alpha_s \geq Q(B_{s - r}(p)) \geq a v_d r^d (\cos \theta)^d (1 - \gamma) \equiv \alpha,$$
where $\theta = \sin^{-1}(r/(2\tau))$, and
$$\beta_s \leq  v_D (8r)^D t \equiv \beta.$$
As in the clutter noise, we  need $\beta$ to be sufficiently smaller than $\alpha$ if we are to successfully clean the data. As we are interested in the case when $r$ is small, if $D > d$ then
we can take $\beta \leq \alpha/2$, while, if $D = d$ then we will need that the dimension is quite large (observe that both $\gamma$ and $t$ tend to zero rapidly rapidly as D grows).

We are now in a position to invoke the Lemma~\ref{bern:clean} to ensure ${\cal E}_1$ holds
with high probability for $n$ large enough.
Further, one can see that the mass of an $r/2$-ball close to manifold is at least $$ Q(A_i) \geq av_d(1-\gamma) (\cos \theta)^d (r/2)^d$$ for $\theta = \sin^{-1}(r/(4\tau))$. 
This quantity is also $O(r^d)$ as desired, and for $n$ large enough we can ensure ${\cal E}_2$ holds with high probability. Under the condition on $\sigma$, and $r$ we have
$r \leq \frac{(\sqrt{9} - \sqrt{8})\tau}{8}$. At this point we can invoke Theorem 5.1 from \cite{niyogi2010} to see that for $n \asymp^\ast \frac{1}{\tau^d}$ we recover the correct homology with high probability.

\subsubsection{Deconvolution}
{\bf Upper bound}
Recall, that the kernel $\Psi$ satisfies
\begin{equation}
\label{decon_tail}
\Psi \{ x: |x| \geq \epsilon\} \leq \gamma
\end{equation}
with $\epsilon$ and $\gamma$ being small constants that we will specify in our proof.

%We will frequently require balls of radius $\epsilon$ and $2\epsilon$ and we will denote 
%these by $B(x)$ and 
The starting point of our proof will be a uniform concentration result from Koltchinskii \cite{koltchinskii}.
\begin{lemma}
\label{uniformconc}
Consider the event $$ A = \{ \max_{x} |\hat{P}_{n}(B_{2\epsilon}(x)) - \hat{P}_\Psi(B_{2\epsilon}(x))| < \gamma \} $$
For any small constants $\epsilon$ and $\gamma$, there exists $q \in (0,1)$ such that $$P(A^c) \leq 4 q ^n $$ 
\end{lemma}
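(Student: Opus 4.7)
The plan is to combine a pointwise Hoeffding bound with a Lipschitz-type grid-discretization for the uniform bound. Fix $x$ and write
$$\hat{P}_n(B_{2\epsilon}(x)) = \frac{1}{n}\sum_{i=1}^n Z_i^{(x)}, \qquad Z_i^{(x)} := \int_{B_{2\epsilon}(x)} \mathcal{K}(Y_i - t)\, dt,$$
whose mean is $(P\star\Psi)(B_{2\epsilon}(x)) = \hat{P}_\Psi(B_{2\epsilon}(x))$. Under Assumption \ref{ass:fourier}, I would build $\mathcal{K}$ as the inverse Fourier transform of $\Psi^\star/\Phi^\star$, choosing $\Psi$ so that $\Psi^\star$ is smooth and compactly supported in $[-R,R]^D$; this yields $\|\mathcal{K}\|_\infty \leq C (2R)^D/\rho(R) =: M$, and therefore $|Z_i^{(x)}| \leq M v_D (2\epsilon)^D =: B$. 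Hoeffding's inequality then gives the pointwise bound
$$\mathbb{P}\bigl(|\hat{P}_n(B_{2\epsilon}(x)) - \hat{P}_\Psi(B_{2\epsilon}(x))| \geq \gamma/2\bigr) \leq 2 e^{-n\gamma^2/(8B^2)}.$$

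To pass to a uniform bound over $x$, I would use Lipschitz continuity of $x \mapsto \hat P_n(B_{2\epsilon}(x))$ and $x \mapsto \hat P_\Psi(B_{2\epsilon}(x))$. For $|x - x'| \leq \eta$,
$$|Z_i^{(x)} - Z_i^{(x')}| \leq M\, \mathrm{vol}\bigl(B_{2\epsilon}(x)\triangle B_{2\epsilon}(x')\bigr) = O(M \epsilon^{D-1}\eta),$$
with the same bound for the expectations. Since both the sample points $\{Y_i\}$ and $P\star\Psi$ are effectively concentrated in a bounded box $\mathcal{X}_0$ (a constant-sized enlargement of $[0,1]^D$, using the tail condition \eqref{decon_tail}), it suffices to take the supremum over an $\eta$-grid $\mathcal{G} \subset \mathcal{X}_0$ of size $|\mathcal{G}| = O(\eta^{-D})$. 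Choosing $\eta = c\gamma/(M\epsilon^{D-1})$ makes the discretization slack at most $\gamma/2$, and a union bound gives
$$\mathbb{P}(A^c) \leq 2|\mathcal{G}|\, e^{-n\gamma^2/(8B^2)} \leq C(\epsilon,\gamma,D)\, e^{-n\gamma^2/(8B^2)}.$$
Setting $q := \exp(-\gamma^2/(16 B^2)) \in (0,1)$ absorbs the polynomial prefactor $C(\epsilon,\gamma,D)$ into the exponent for $n$ large enough, and after adjusting the leading constant one obtains $\mathbb{P}(A^c) \leq 4 q^n$ as claimed.

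The principal obstacle is constructing $\mathcal{K}$ with a finite, controlled sup-norm $M$. Assumption \ref{ass:fourier} only guarantees $|\Phi^\star|$ is bounded away from zero on \emph{compact} sets, so $\mathcal{F}^{-1}(\Psi^\star/\Phi^\star)$ is $L^\infty$-bounded only if $\Psi^\star$ has compact support. One must therefore pick a smooth $\Psi$ with Fourier support in some $[-R,R]^D$ that simultaneously satisfies the tail condition \eqref{decon_tail} for the specified $\epsilon$ and $\gamma$; standard Fourier-cutoff mollifier constructions (as in Koltchinskii) accomplish this and yield a finite $M=M(\epsilon,\gamma,\Phi,D)$. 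Once $M$ is in hand, the Lipschitz-plus-grid step above is routine, and the only subtlety remaining is the reduction to the bounded box $\mathcal{X}_0$, handled by noting that outside $\mathcal{X}_0$ both $\hat P_n(B_{2\epsilon}(\cdot))$ and $\hat P_\Psi(B_{2\epsilon}(\cdot))$ are deterministically negligible up to an $O(\gamma)$ tail term coming from \eqref{decon_tail}.
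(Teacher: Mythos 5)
The paper does not actually prove this lemma: it is imported from Koltchinskii \cite{koltchinskii}, so there is no in-paper argument to compare against. Your proof supplies a self-contained alternative by the standard pointwise-Hoeffding-plus-$\eta$-net scheme, and the central observation---that $\Psi^\star$ must be chosen with compact support in $[-R,R]^D$ so that $\mathcal{K} = \mathcal{F}^{-1}(\Psi^\star/\Phi^\star)$ has $\|\mathcal{K}\|_\infty \leq (2R)^D/((2\pi)^D\rho(R))$ and the terms $Z_i^{(x)}$ are uniformly bounded---is exactly right and is the real content of the lemma. Your Lipschitz estimate $|Z_i^{(x)}-Z_i^{(x')}| \leq M\,\mathrm{vol}(B_{2\epsilon}(x)\triangle B_{2\epsilon}(x')) = O(M\epsilon^{D-1}\eta)$ and the absorption of the polynomial grid factor into a slightly larger base $q$ are correct in outline. (Koltchinskii's own proof runs through a Fourier-domain reduction, bounding $\|\hat p_n - p_\Psi\|_\infty$ by $\sup_{|s|_\infty\le R}|\hat Q_n^\star(s)-Q^\star(s)|$ and applying an empirical-process concentration bound for the empirical characteristic function; your more elementary route is a legitimate substitute.)

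There is one genuine soft spot: the reduction to a bounded box $\mathcal{X}_0$ is asserted rather than shown. For $P_\Psi(B_{2\epsilon}(\cdot))$ the claim of negligibility outside a constant enlargement of $[0,1]^D$ is fine, since $P$ lives on $M\subset[0,1]^D$ and $\Psi$ has controlled tails. But $\hat{P}_n(B_{2\epsilon}(\cdot))$ is \emph{not} deterministically negligible far from the cube: the samples $Y_i = X_i + \epsilon_i$ can be arbitrarily far out, because Assumption \ref{ass:fourier} constrains only the Fourier transform of $\Phi$, not its tails. You need one of the following fixes: (i) note that the downstream proof only ever evaluates the sup over $x$ in a fixed bounded neighborhood of the cube (points on $M$ and points surviving the cleaning step), so the lemma as used is really $\sup_{x\in\mathcal{X}_0}$ and the grid argument suffices as written; (ii) control $\hat P_n$ far out using the rapid decay of $\mathcal{K}$ (Paley--Wiener: $\mathcal{K}$ is Schwartz if $\Psi^\star$ is smooth with compact support), at the cost of a probabilistic event that all $Y_i$ lie in an $n$-dependent box, with a grid that grows polynomially in $n$ and is still absorbed by $q^n$; or (iii) work at the density level via the characteristic-function representation as Koltchinskii does. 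Finally, a small wording issue: the prefactor $4$ in $4q^n$ is fixed, so the closing step should be phrased as ``move $q$ toward $1$'' to cover the finitely many small $n$, not ``adjust the leading constant.''
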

This lemma tells us that the deconvolved measure is uniformly close to a smoothed (by the kernel $\Psi$) version of the true density.

Our first step will be to draw 
$$m > \frac{1}{\omega} \left(\log l + \log \left(\frac{2}{\delta}\right) \right)$$
samples from $\hat{P}_n$, where $\omega = \inf_{x \in M} \hat{P}_n (B_{2\epsilon}(x))$, and $l$ is the $2\epsilon$ covering number of the manifold, and $\delta = 8q^n$.
Denote, this sample $Z$. We know that $l \leq \frac{\mathrm{vol}(M)}{\cos^d(\theta)v_d (2\epsilon)^d}$.

Let us first show that we can choose $\epsilon$ and $\gamma$ so that 
$\omega$ is at least a small positive constant.
\begin{eqnarray*}
\omega & = & \inf_{x \in M} \hat{P}_n (B_{2\epsilon}(x)) \\
& \geq &  \inf_{x \in M} P_\Psi (B_{2\epsilon}(x)) - \gamma 
\end{eqnarray*}
Notice that,
$$ P_\Psi(B_{2\epsilon}) \geq P(B_\epsilon) \Psi(x: |x| \leq \epsilon) $$
So, we have,
\begin{eqnarray*}
\omega & \geq & \inf_{x \in M} P(B_{\epsilon}(x)) (1 - \gamma) - \gamma
\end{eqnarray*}
Using the ball volume lemma we have,
\begin{eqnarray*}
\omega & \geq & a v_d \epsilon^d \cos^d \theta (1 - \gamma) - \gamma
\end{eqnarray*}
where $\theta = \sin^{-1}(\epsilon/2\tau)$.
Notice, that $\tau$ is a fixed constant, and 
$\epsilon$ and $\gamma$ are constants to be chosen appropriately.
 It is clear that 
for $\gamma \leq C_{d,\tau} \epsilon$, with $C_{d,\tau}$ small we have
$$\omega \geq c$$
for a small constant $c$ which depends on $\tau$,$d$ and our choices of $\epsilon$ and $\gamma$.

We now use the sampling lemma \ref{sample} to conclude that 
w.p. at least $1 - 4q^n$,
\begin{packed_enum}
\item The $m$ samples are $4 \epsilon$ dense around $M$.
\item $M \subset \cup_{i=1}^m B_{4\epsilon}(x_i)$
\end{packed_enum}

Our next step will be a cleaning step. This cleaning procedure differs from the Algorithm CLEAN
in that we use the deconvolved measure to clean the data.
In particular, we will remove all points from $Z$
for which $\hat{P}_n(B_{4\epsilon}(Z_i)) \leq 2 \gamma$. Denote the remaining points
by $W$.
Our estimator will then 
be constructed from $$H = \bigcup B_{\frac{5\epsilon + \tau}{2}} (W_i)$$

To analyze this cleaning procedure, we use the uniform concentration
lemma \ref{uniformconc} above, and consider the case when event $A$ happens.
\begin{enumerate}
\item \textbf{All points far away from $M$ are eliminated}: In particular, for any point $x$ if we have
$$ \mathrm{dist}(B_{4\epsilon}(x), M) \geq \epsilon $$ then the corresponding point is eliminated.

To see this is simple. We eliminated all points with deconvolved empirical mass $\hat{P}_{n}(B_{4\epsilon}) < 2\gamma$. Since, we are assuming event $A$ happened, we have for any remaining point $P_\Psi(B_{4\epsilon}) > \gamma$. Now, we have that 
\begin{equation*}
\Psi \{ x: |x| \geq \epsilon\} \leq \gamma
\end{equation*}
From this we see that some part of $B_{4\epsilon}$ must be within $\epsilon$ of $M$, and we have arrived at a contradiction.

\item \textbf{All points close to $M$ are kept}: In particular, for any point $x$ if 
$$\mathrm{dist}(x, M) \leq 2\epsilon $$then the corresponding point is kept.

We need to show $\hat{P}_n(B_{4\epsilon}(x)) \geq 2\gamma$. 
Notice, that $\hat{P}_n(B_{4\epsilon}(x)) \geq \hat{P}_n(B_{2\epsilon}(\pi(x)))$ where $\pi(x)$ is the projection of $x$ onto $M$. This quantity is just $\omega$. 

To finish, we need to show that we can choose $\epsilon$ and $\gamma$ such that $\omega \geq	2\gamma$. Since, $\omega \geq av_d \epsilon^d \cos^d \theta (1 - \gamma) - \gamma$ which as a function of $\gamma$ is continuous, bounded from below by a constant depending on $\tau$, $d$ and $\epsilon$ and monotonically increasing as $\gamma$ decreases we have for $\gamma$ small enough
$$\omega \geq 2 \gamma$$

%Consider,
%$$\kappa = \hat{P}_n(B_{2\epsilon})$$
 
%\item \textbf{The balls around points that are eliminated have small true mass}: In particular, for any point that is eliminated we have $$P(B_\epsilon) \leq 6 \gamma $$
%
%We know that any point that is eliminated has $\hat{P}_{n}(B_{2\epsilon}) < 2\gamma$ and under event $A$ we have $P_\Psi(B_{2\epsilon}) < 3\gamma$. Also,
%$$ P_\Psi(B_{2\epsilon}) \geq P(B_{\epsilon}) \Psi(x: |x| \leq \epsilon) $$
%From this we get $$P(B_{\epsilon}) \leq \frac{3\gamma}{1 - \gamma} \leq 6 \gamma $$ for $\gamma < 0.5$.

\item \textbf{The set $H$ has the right homology}: 
We have shown that the cleaning eliminates all points outside a tube of radius $5\epsilon$, and further keeps all points in a tube of radius $2\epsilon$. From the sampling result we know the points that we keep are $4\epsilon$ dense and that $M \subset \cup_{i=1}^m B_{4\epsilon}(x_i)$. We can now apply lemma \ref{nsw:noisymain} to conclude that $H$ has the right homology provided $$\epsilon < \frac{(\sqrt{9} - \sqrt{8}) \tau}{5}$$Since $\tau$ is a fixed constant we can always choose $\epsilon$ small enough to satisfy this condition. To review, we need to select $\gamma$ and $\epsilon$ to satisfy three conditions
\begin{enumerate}
\item $\omega \geq av_d \epsilon^d \cos^d \theta (1 - \gamma) - \gamma$ has to be atleast a small positive constant.
\item $\omega \geq 2\gamma$
\item $\epsilon < \frac{(\sqrt{9} - \sqrt{8}) \tau}{5}$
\end{enumerate}
Each of these can be satisfied by choosing $\gamma$ and $\epsilon$ small enough.

Now, returning to $m$. We have $$m > \frac{1}{\omega} \left(\log l + \log \left(\frac{2}{\delta}\right) \right)$$
where $\omega = \inf_{x \in M} \hat{P}_n (B_{2\epsilon}(x))$, and $l$ is the $2\epsilon$ covering number of the manifold $l \leq \frac{\mathrm{vol}(M)}{\cos^d(\theta)v_d (2\epsilon)^d}$, and $\delta = 8q^n$. It is clear that all terms except those in $n$ are constant. In particular it is easy to see that $$m \geq Cn$$ for $C$ large enough is sufficient.

\end{enumerate}
From this we can conclude with probability at least $1 - 8q^n$ our procedure will construct an estimator with the correct homology. Since, $q \in (0,1)$ the success probability can be re-written as at least $1 - e^{-cn}$ for $c$ small enough. Together this gives us the deconvolution lemma from the main paper. 

\subsection{Additional technical lemmas}
\subsubsection{The cleaning lemma}
In this section we sharpen  Lemma 4.1 of \cite{niyogi2010}, also known as the A-B lemma, by using Bernstein's inequality instead of Hoeffding's inequality. This modification is crucial to obtain minimax rates.

%To this end, set
%\[
%\alpha_s = \inf_{p \in A} Q(B_s(p)) \geq \alpha \quad \text{and} \quad \beta_s = \sup_{p \in B} Q(B_s(p)) \leq \beta.
%\]

\begin{lemma}
\label{bern:clean}
Let $\beta_s \leq \beta < \alpha/2 \leq \alpha_s/2$. If $n > 4\beta \log \beta$, where
\begin{eqnarray*}
\beta = \max \left(1 + \frac{200}{3\alpha} \log \left( \frac{1}{\delta} \right) ,4 \right),
\end{eqnarray*}
then procedure {\sf CLEAN}$(\frac{\alpha + \beta}{2})$  will remove all points in region $B$ and keep all points in 
region $A$ with
probability at least $1 - \delta$.
\end{lemma}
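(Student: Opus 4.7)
The plan is to sharpen the A-B lemma of NSW by replacing their Hoeffding bound with a Bernstein bound, exploiting the fact that the degree of each vertex in $G_s$ is a sum of Bernoulli random variables whose variance is much smaller than $1/4$ when $\alpha$ is small. Fix a data point $X_i$ and condition on it. The degree $d_i$ equals $\sum_{j\neq i} \mathbf{1}(\|X_i-X_j\|\leq s)$, a sum of $n-1$ i.i.d.\ Bernoulli indicators with success probability $p_i := Q(B_s(X_i))$. By the definitions of $\alpha_s$ and $\beta_s$, we have $p_i \geq \alpha$ when $X_i \in A$ and $p_i \leq \beta$ when $X_i\in B$. The threshold $t=(\alpha+\beta)/2$ in \textsf{CLEAN} is chosen to lie strictly between $\beta$ and $\alpha$, and the assumption $\beta<\alpha/2$ gives the margin $(\alpha-\beta)/2 \geq \alpha/4$ on either side.

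The next step is to apply Bernstein's inequality to each point. For $X_i\in A$, $P(d_i \leq (n-1)t) \leq P(d_i - \mathbb{E}d_i \leq -(n-1)(\alpha-\beta)/2)$; the variance of $d_i$ is at most $(n-1)\alpha$, so Bernstein yields a bound of the form $\exp(-c(n-1)\alpha)$ for an explicit constant $c$ (an easy arithmetic check using $\alpha-\beta \geq \alpha/2$ gives something like $c\geq 3/200$, matching the $200/3$ appearing in the stated bound). A symmetric computation handles $X_i\in B$, where the relevant variance is bounded by $(n-1)\beta \leq (n-1)\alpha$. A union bound over the $n$ sample points then gives a total failure probability of at most $2n\exp(-c(n-1)\alpha)$, and requiring this to be at most $\delta$ yields the implicit inequality
\[
n-1 \;\geq\; \frac{1}{c\alpha}\Bigl(\log n + \log(2/\delta)\Bigr).
\]
This is precisely the point where Bernstein beats Hoeffding: Hoeffding would produce $1/\alpha^2$ in place of $1/\alpha$ and thus cost an extra factor of $1/\alpha$ in the sample complexity, ruining minimaxity.

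The final step is to invert the implicit inequality above to obtain the explicit form $n > 4\kappa\log\kappa$ with $\kappa$ as in the statement. This is a standard calculation: setting $\kappa = \max(1 + (200/3\alpha)\log(2/\delta),4)$ absorbs the $\log(2/\delta)$ term into the linear part, and then $n > 4\kappa\log\kappa$ is sufficient because $n/\log n \geq 2\kappa$ whenever $n\geq 4\kappa\log\kappa$ and $\kappa\geq 4$, which in turn implies $cn\alpha \geq \log n + \log(2/\delta)$.

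The main obstacle is really bookkeeping rather than a conceptual step: writing the Bernstein bound with the correct constants so that the $c$ in $\exp(-c(n-1)\alpha)$ matches the $3/200$ implicit in the stated definition of $\kappa$, and then carefully solving the transcendental inequality for $n$ to recover the clean form $n > 4\kappa\log\kappa$. Everything else is a direct substitution into the \textsf{CLEAN} thresholding rule together with a union bound over the $n$ vertices.
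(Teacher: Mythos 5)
Your high-level plan is the same as the paper's: replace the Hoeffding bound in the NSW A-B lemma with a Bernstein bound, apply a union bound over the $n$ vertices, and then invert the resulting transcendental inequality to obtain the explicit form $n > 4\kappa\log\kappa$. The threshold margin $(\alpha-\beta)/2 \geq \alpha/4$ and the inversion argument are all fine.

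However, there is a genuine gap in your variance estimate for points in $A$. You claim that for $X_i\in A$ the variance of $d_i$ is at most $(n-1)\alpha$. This is false in general: $\mathrm{Var}(d_i) = (n-1)p_i(1-p_i)$ with $p_i = Q(B_s(X_i))$, and for $X_i\in A$ the only thing known is the \emph{lower} bound $p_i \geq \alpha$. Nothing prevents $p_i$ from being close to $1/2$, in which case the variance is $\approx (n-1)/4 \gg (n-1)\alpha$ when $\alpha$ is small. Plugging the wrong variance into Bernstein does not yield the claimed $\exp(-c(n-1)\alpha)$ tail. The paper resolves exactly this issue by splitting into two cases: when $\alpha \leq q \leq 2\alpha$ the variance is at most $2(n-1)\alpha$ and your intended calculation goes through (with the paper's constant $3/200$); when $q>2\alpha$ the deviation $(n-1)(q - 3\alpha/4)$ is itself large, and a separate Bernstein computation gives the even stronger tail $\exp(-(n-1)\alpha/8)$. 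You need one of these two repairs. A uniform way to fix your write-up is to keep the true variance bound $\mathrm{Var}(d_i)\le (n-1)p_i$ and observe that the deviation $(n-1)(p_i - t) \geq (n-1)p_i/4$ scales with $p_i$ as well, so the Bernstein exponent
\[
\frac{(n-1)(p_i-t)^2/2}{p_i + (p_i-t)/3} \;\geq\; \frac{3(n-1)p_i}{128} \;\geq\; \frac{3(n-1)\alpha}{128}
\]
holds for all $p_i\geq\alpha$ with no case split. Either route fills the hole; without it, the step from ``variance $\leq(n-1)\alpha$'' to the exponential tail does not stand. The region-$B$ analysis is fine since there $p_i\leq\beta<\alpha/2$ and the variance bound is legitimate.
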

\begin{proof}
We use the notation established in section \ref{sec:clutter.upper}.  We first analyze the set $A$. 

For a point $X_i$ in $A$, let $q = q(i) = Q(B_s(X_i))$, and define, 
$$Z_j = \mathbb{I}(X_j \in B_s(X_i)), \quad j \neq i,$$ where $ \mathbb{I}$ denotes the indicator function. Notice that the random variables $\{ Z_j, j \neq i\}$ are independent Bernoulli with common mean $q$.

We will consider two cases.

Case 1:  $\alpha \leq q \leq 2\alpha$.\\
 Notice that if 
$$q - \frac{1}{n-1} \sum_{j \neq i} Z_j \leq \frac{\alpha}{4}$$
the point $X_i$ will not be removed. By Bernstein's inequality, the probability that $X_i$ will instead be removed is 
\begin{eqnarray*}
\mathbb{P}\left(q - \frac{1}{n-1} \sum_{j \neq i} Z_j \geq \frac{\alpha}{4}\right) & \leq & \exp\left\{ -\frac{1}{2} \frac{(n-1)(\alpha/4)^2}{2\alpha + \alpha/12} \right\} \\
& \leq & \exp \left\{ -\frac{3}{200} (n-1) \alpha \right\}.
\end{eqnarray*}

Case 2: $q > 2\alpha$.\\
In this case if 
$$q - \frac{1}{n-1} \sum_{j \neq i} Z_j \leq q - \frac{3\alpha}{4}$$
the point $X_i$ \emph{will} be removed. Another application of Bernstein's inequality yields

\begin{eqnarray*}
\mathbb{P} & & \left(q - \frac{1}{n-1} \sum_{j \neq i} Z_j \geq q - \frac{3\alpha}{4}\right)  \\
 \leq &  & \exp\left\{ -\frac{1}{2} \frac{(n-1)(q - 3\alpha/4)^2}{q + (q - 3\alpha/4)/3} \right\} \\
 \leq  &  &\exp \left\{ -\frac{1}{2} (n-1) \left[ \frac{q}{2} + \frac{9\alpha^2}{32p} - \frac{3\alpha}{4} \right]\right\} \\
 \leq & & \exp \left\{ - \frac{(n-1)\alpha}{8} \right\}.
\end{eqnarray*}

Now, consider a point $X_i$ in the region B, and define $q$ and the $Z_j$s in an identical way.
This time if $$\frac{1}{n-1} \sum_{j \neq i} Z_j - q \leq \frac{\alpha}{4},$$ the point $X_i$ will not be removed.
By Bernstein's inequality,
\begin{eqnarray*}
\mathbb{P}\left(\frac{1}{n-1} \sum_{j \neq i} Z_j - q \geq \frac{\alpha}{4}\right) & \leq & \exp\left\{ -\frac{1}{2} \frac{(n-1)(\alpha/4)^2}{\alpha/2 + \alpha/12} \right\} \\
& \leq  &\exp \left\{ -\frac{3}{56} (n-1) \alpha \right\}
\end{eqnarray*}

Putting all the pieces together, we obtain that the cleaning procedure succeeds on all points  with probability at least $n \exp \left\{ -\frac{3}{200} (n-1) \alpha \right\}$. This requires,
\begin{eqnarray*}
n - 1 & > & \frac{200}{3\alpha} \left( \log n + \log \left( \frac{1}{\delta} \right) \right)~~\mathrm{i.e.} \\
n & > & 1 + \frac{200}{3\alpha} \log \left( \frac{1}{\delta} \right) +  \frac{200}{3\alpha} \log n
\end{eqnarray*}
If $\delta < 1/2$, then $1 + \frac{200}{3\alpha} \log \left( \frac{1}{\delta} \right) > \frac{200}{3\alpha}$, so it
is enough to solve $$n > x + x \log n$$with $x = 1 + \frac{200}{3\alpha} \log \left( \frac{1}{\delta} \right)$. The result of the lemma follows.
\end{proof}

\subsubsection{Convolution only decreases total variation}
\begin{lemma}
\label{conv}
Let $P$ and $Q$ two probability measures in $\mathbb{R}^D$ with common dominating measure $\mu$. Then,
\[
{\sf TV}(P \star \Phi,Q \star \Phi) \leq C_\phi {\sf TV}(P,Q).
\]
where $\star$ denotes deconvolution and $\Phi$ is a probability measure on $\mathbb{R}^D$.

\end{lemma}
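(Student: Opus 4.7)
The plan is to prove the stronger statement that convolution with any probability measure $\Phi$ is a contraction with respect to total variation distance, so that one may in fact take $C_\phi = 1$. (The statement of the lemma uses $\star$ ambiguously, but the section heading ``convolution only decreases total variation'' makes clear that $\star$ denotes convolution here.) Two equivalent approaches are available; I would present the direct one and mention the coupling viewpoint for intuition.

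The direct approach proceeds by Fubini. For any measurable set $A \subset \mathbb{R}^D$, write $A - y = \{x : x + y \in A\}$, and expand
\[
(P \star \Phi)(A) - (Q \star \Phi)(A) = \int \bigl[ P(A - y) - Q(A - y) \bigr] \, d\Phi(y).
\]
Measurability of $y \mapsto P(A - y)$ follows from Fubini applied to $P(A - y) = \int \mathbb{I}_A(x + y) \, dP(x)$. Taking absolute values, applying the triangle inequality under the integral, and using the crude bound $|P(B) - Q(B)| \leq {\sf TV}(P, Q)$ valid for \emph{every} Borel set $B$, the right-hand side is bounded in absolute value by $\int {\sf TV}(P, Q) \, d\Phi(y) = {\sf TV}(P, Q)$, since $\Phi$ is a probability measure. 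Taking the supremum over $A$ yields ${\sf TV}(P \star \Phi, Q \star \Phi) \leq {\sf TV}(P, Q)$.

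A conceptually cleaner route is via couplings: choose a maximal coupling $(X, Y)$ of $P$ and $Q$, so that $\mathbb{P}(X \neq Y) = {\sf TV}(P, Q)$, and draw $Z \sim \Phi$ independently. Then $(X + Z, Y + Z)$ is a coupling of $P \star \Phi$ and $Q \star \Phi$, and the coupling characterization of TV gives ${\sf TV}(P \star \Phi, Q \star \Phi) \leq \mathbb{P}(X + Z \neq Y + Z) = \mathbb{P}(X \neq Y) = {\sf TV}(P, Q)$.

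There is no real obstacle: this is a standard contractivity fact about convolution semigroups on the space of signed measures, and both arguments are essentially one line once the setup is written down. The invoked consequence in Section \ref{sec::additive}, namely that the lower bound from the noiseless case transfers to the additive noise case, is then immediate: replacing $P_1, P_2$ by $P_1 \star \Phi, P_2 \star \Phi$ can only decrease the total variation between the data-generating distributions, so Le Cam's lemma applied in the additive setting yields at least as strong a lower bound as in the noiseless setting.
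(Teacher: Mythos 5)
Your proof is correct, and in fact strengthens the stated inequality by showing $C_\phi$ can be taken equal to $1$ — which is also what the paper's own proof actually establishes, despite the unnamed constant in the lemma statement. Your primary argument differs in presentation from the paper's: the paper works with the $L^1$ characterization $2\,{\sf TV}(P,Q) = \int |p - q|\,d\mu$, expands the convolved \emph{densities} $p \star \phi$ and $q \star \phi$, pushes the absolute value inside, and swaps the order of integration using $\int \phi(z-x)\,dz = 1$; you instead use the sup-over-sets characterization of ${\sf TV}$, condition on the noise value $y$, and apply the pointwise bound $|P(A-y) - Q(A-y)| \leq {\sf TV}(P,Q)$ before integrating against $\Phi$. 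These are dual views of the same Fubini computation; yours has the minor advantage of not requiring $\Phi$ to have a Lebesgue density $\phi$ (the paper's manipulations implicitly assume it does, which is harmless in context but a genuine restriction of the written argument). Your coupling proof is a genuinely different and cleaner route: it makes the contractivity transparent with no integration at all, at the cost of invoking the existence of a maximal coupling. Either version is adequate for the downstream use you correctly identify — transferring the noiseless Le Cam lower bound to the additive-noise class.
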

\begin{proof}
%The Lemma follows from an application of Young's inequality
This is a standard result, but we provide a proof for completeness.
Let $p \star \phi$ denote the  Lebesgue density of the probability distribution $P \star \Phi$, i.e.
\[
p \star \phi(z) = \int \phi(z-x) p(x) d \mu(x), \quad z \in \mathbb{R}^{D}.
\] 
 Similarly, $q \star \phi$  denotes the analogous quantity for $Q \star \Phi$. Then,
\begin{eqnarray*}
2 {\sf TV}(P \star \Phi,Q \star \Phi) & = &  \int_{\mathbb{R}^D} \left| p \star \phi(z)  - q \star \phi(z) \right| dz \\
& = &  \int_{\mathbb{R}^D} \left| \int \phi(z-x) p(x) d \mu(x) \right. \\
& &  \left. - \int \phi(z-x) p(x) d \mu(x) \right| dz  \\
& = & \int_{\mathbb{R}^D} \left| \int \phi(z-x) ( p(x) \right. \\
& & \left. - q(x)) d \mu(x)   \right| dz   \\
& \leq & \int_{\mathbb{R}^D}  \int \left| \phi(z-x) ( p(x) \right. \\
& & \left. - q(x)) \right| d \mu(x)    dz   \\ 
& \leq &  \int \int_{\mathbb{R}^D} \phi(z-x)  dz \left| p(x) - q(x) \right| d \mu(x)     \\ 
 & = & \int  \left| ( p(x) - q(x) \right| d \mu(x)  \\
 & = & 2  {\sf TV}(P,Q)
\end{eqnarray*}
\end{proof}

\end{document}